\date{}
\title{\bfseries Best-item Learning in Random Utility Models with Subset Choices} 
\author{
Aadirupa Saha\thanks{Indian Institute of Science, Bangalore, India. {\tt aadirupa@iisc.ac.in}}, \and Aditya Gopalan \thanks{Indian Institute of Science, Bangalore, India. {\tt aditya@iisc.ac.in} }
}
\newtheorem{thm}{Theorem}%[section]
\newtheorem{lem}[thm]{Lemma}
\newtheorem{cor}[thm]{Corollary}
\newtheorem{defn}[thm]{Definition}
\newtheorem{rem}{Remark}
\newcommand{\R}{{\mathbb R}}
\newcommand{\E}{{\mathbf E}}
\newcommand{\1}{{\mathbf 1}}
\newcommand{\cA}{{\mathcal A}}
\newcommand{\cE}{{\mathcal E}}
\newcommand{\cF}{{\mathcal F}}
\newcommand{\cG}{{\mathcal G}}
\newcommand{\cD}{{\mathcal D}}
\newcommand{\cR}{{\mathcal R}}
\newcommand{\X}{{\mathcal X}}
\newcommand{\hp}{{\hat p}}
\newcommand{\p}{{\mathbf p}}
\newcommand{\sm}{\setminus}
\newcommand{\btheta}{\boldsymbol \theta}
\newcommand{\bSigma}{\boldsymbol \Sigma}
\newcommand{\bnu}{{\boldsymbol \nu}}
\newcommand{\bsigma}{\boldsymbol \sigma}
\newcommand{\smi}[1]{{S \setminus #1}}
\def \tf{{ top-$m$ ranking feedback}}
\def \iidn{{\it `independent and identically distributed noise'}}
\def \nn{{\it Gaussian}}
\def \xn{{\it Exponential}}
\def \gn{{\it Gumbel}}
\def \un{{\it Uniform}}
\def \ratio{{\it Advantage-Ratio}}
\def \mrat{{\it Min-AR}}
\def \algpp{{\it Sequential-Pairwise-Battle}}
\def \algp{{Seq-PB}}
\def \algpm{{mSeq-PB}}
\def \rb{{\it Rank-Breaking}}
\newcommand{\red}[1]{\textcolor{red}{#1}}
\begin{document}

% If your paper is accepted and the title of your paper is very long,
% the style will print as headings an error message. Use the following
% command to supply a shorter title of your paper so that it can be
% used as headings.
%
%\runningtitle{I use this title instead because the last one was very long}

% If your paper is accepted and the number of authors is large, the
% style will print as headings an error message. Use the following
% command to supply a shorter version of the authors names so that
% they can be used as headings (for example, use only the surnames)
%
%\runningauthor{Surname 1, Surname 2, Surname 3, ...., Surname n}
\maketitle

\begin{abstract}
	We consider the problem of PAC learning the most valuable item from a pool of $n$ items using sequential, adaptively chosen plays of subsets of $k$ items, when, upon playing a subset, the learner receives relative feedback sampled according to a general Random Utility Model (RUM) with independent noise perturbations to the latent item utilities. We identify a new property of such a RUM, termed the minimum advantage, that helps in characterizing the complexity of separating pairs of items based on their relative win/loss empirical counts, and can be bounded as a function of the noise distribution alone. We give a learning algorithm for general RUMs, based on pairwise relative counts of items and hierarchical elimination, along with a new PAC sample complexity guarantee of $O(\frac{n}{c^2\epsilon^2} \log \frac{k}{\delta})$ rounds to identify an $\epsilon$-optimal item with confidence $1-\delta$, when the worst case pairwise advantage in the RUM has sensitivity at least $c$ to the parameter gaps of items. Fundamental lower bounds on PAC sample complexity show that this is near-optimal in terms of its dependence on $n,k$ and $c$. 
\end{abstract}

%%%%%%%%%%%%%%%%%%%%%%%%%%%%%%%%%%%%%%%%%

\section{Introduction}
\label{sec:intro}

Random utility models (RUMs) are a popular and well-established framework for studying behavioral choices by individuals and groups \cite{thurstone1927law}. In a RUM with finite alternatives or items, a distribution on the preferred alternative(s) is assumed to arise from a random utility drawn from a distribution for each item, followed by rank ordering the items according to their utilities. 

Perhaps the most widely known RUM is the Plackett-Luce or multinomial logit model \cite{plackett1975analysis,luce2012individual} which results when each item's utility is sampled from an additive model with a Gumbel-distributed perturbation. It is unique in the sense of enjoying the  property of independence of irrelevant attributes (IIA), which is often key in permitting efficient inference of Plackett-Luce models from data \cite{KhetanOh16}. Other well-known RUMs include the probit model \cite{bliss1934method} featuring random Gaussian perturbations to the intrinsic utilities, mixed logit, nested logit, etc.

A long line of work in statistics and machine learning focuses on estimating RUM properties from observed data \cite{AzariRB+14,zhao2018learning,soufiani2013generalized}. Online learning or adaptive testing, on the other hand, has shown efficient ways of identifying the most attractive (i.e., highest utility) items in RUMs by learning from relative feedback from item pairs or more generally subsets \cite{Busa_pl,SGwin18,SueIcml+17}. However, almost all existing work in this vein exclusively employs the Plackett-Luce model, arguably due to its very useful IIA property, and our understanding of learning performance in other, more general RUMs has been lacking. We take a step in this direction by framing the problem of sequentially learning the best item/items in general RUMs by adaptive testing of item subsets and observing relative RUM feedback. In the process, we uncover new structural properties in RUMs, including models with exponential, uniform, Gaussian (probit) utility distributions, and give algorithmic principles to exploit this structure, that permit provably sample-efficient online learning and allow us to go beyond Plackett-Luce.

{\bf Our contributions:} We introduce a new property of a RUM, called the (pairwise) {\em advantage ratio}, which essentially measures the worst-case relative probabilities between an item pair across all possible contexts (subsets) where they occur. We show that this ratio can be controlled (bounded below) as an affine function of the relative strengths of item pairs for RUMs based on several common centered utility distributions, e.g., exponential, Gumbel, uniform, Gamma, Weibull, normal, etc., even when the resulting RUM does not possess analytically favorable properties such as IIA.

We give an algorithm for sequentially and adaptively PAC (probably approximately correct) learning the best item from among a finite pool when, in each decision round, a subset of fixed size can be tested and top-$m$ rank ordered feedback from the RUM can be observed. The algorithm is based on the idea of maintaining pairwise win/loss counts among items, hierarchically testing subsets and propagating the surviving winners -- principles that have been shown to work optimally in the more structured Plackett-Luce RUM \cite{Busa_pl,SGwin18}.

In terms of performance guarantees, we derive a PAC sample complexity bound for our algorithm: when working with a pool of $n$ items in total with subsets of size-$k$ chosen in each decision round, the algorithm terminates in $O(\frac{n}{c^2\epsilon^2} \log \frac{k}{\delta})$ rounds where $c$ is a lower bound on the advantage ratio's sensitivity to intrinsic item utilities. This can in turn be shown to be a property of only the RUM's perturbation distribution, independent of the subset size $k$. A novel feature of the guarantee is that, unlike existing sample complexity results for sequential testing in the Plackett-Luce model, it does not rely on specific properties like IIA which are not present in general RUMs. We also extend the result to cover top-$m$ rank ordered feedback, of which winner feedback ($m = 1$) is a special case. Finally, we show that the sample complexity of our algorithm  is order-wise optimal across RUMs having a given advantage ratio sensitivity $c$, by arguing an information-theoretic lower bound on the sample complexity of any online learning algorithm.
	
Our results and techniques represent a conceptual advance in the problem of online learning in general RUMs, moving beyond the Plackett-Luce model for the first time to the best of our knowledge.

\textbf{Related Work:}
For classical multiarmed bandits setting, there is a well studied literature on PAC-arm identification problem \cite{Even+06,Audibert+10,Kalyanakrishnan+12,Karnin+13,LilUCB}, where the learner gets to see a noisy draw of absolute reward feedback of an arm upon playing a single arm per round. On the contrary, learning to identify the best item(s) with only relative preference information (ordinal as opposed to cardinal feedback) has seen steady progress since the introduction of the dueling bandit framework \cite{Zoghi+13} with pairs of items (size-$2$ subsets) that can be played, and subsequent work on generalisation to broader models both in terms of distributional parameters \cite{Yue+09, Adv_DB,Ailon+14, Zoghi+15MRUCB} as well as combinatorial subset-wise plays \cite{MohajerIcml+17,pbo,SG18,Sui+17}. 
There have been several developments on the PAC objective for different pairwise preference models, such as those satisfying stochastic triangle inequalities and strong stochastic transitivity \citep{BTM}, general utility-based preference models \citep{SAVAGE}, the Plackett-Luce model \citep{Busa_pl} and the Mallows model \citep{Busa_mallows}]. Recent work has studied PAC-learning objectives other than identifying the single (near) best arm, e.g. recovering a few of the top arms \citep{Busa_top,MohajerIcml+17}, or the true ranking of the items \citep{Busa_aaai,falahatgar_nips}. Some of the recent works also extended the PAC-learning objective with relative subsetwise preferences \cite{SGrank18,ChenSoda+17,ChenSoda+18,SGwin18,Ren+18}.

However, none of the existing work considers strategies to learn efficiently in general RUMs with subset-wise preferences and to the best of our knowledge we are the first to address this general problem setup. 
In a different direction, there has been work on batch (non-adaptive) estimation in general RUMs, e.g., \cite{zhao2018learning,soufiani2013generalized}; however, this does not consider the price of active learning and the associated exploration effort required as we study here. A related body of literature lies in dynamic assortment selection, where the goal is to offer a subset of items to customers in order to maximise expected revenue, which has been studied under different choice models, e.g. Multinomial-Logit \citep{assort-mnl}, Mallows and mixture of Mallows \citep{assort-mallows}, Markov chain-based choice models \citep{assort-markov}, single transition model \citep{assort-stm} etc., but again each of this work addresses a given and a very specific kind of choice model, and their objective is more suited to regret minimization type framework where playing every item comes with a associated cost.

\textbf{Organization:} We give the necessary preliminaries and our general RUM based problem setup in Section \ref{sec:prelims}. The formal description of our feedback models and the details of $(\epsilon,\delta)$-best arm identification problem is given in Section \ref{sec:prob}. In Section \ref{sec:pair_pref}, we analyse the pairwise preferences of item pairs for our general RUM based subset choice model and introduce the notion of \ratio\, connecting subsetwise scores to pairwise preferences. Our proposed algorithm along with its performance guarantee and also matching lower bound analysis is given in Section \ref{sec:wi}. We further extend the above results to a more general top-$m$ ranking feedback model in Section \ref{sec:tr}.
Section \ref{sec:conclusion} finally conclude our work with certain future directions. All the proofs of results are moved to the appendix.

\newcommand{\rumk}{RUM$(k,\btheta)$}

\section{Preliminaries}
\label{sec:prelims}
{\bf Notation.} We denote by $[n]$ the set $\{1,2,...,n\}$. For any subset $S \subseteq [n]$, let $|S|$ denote the cardinality of $S$. % with $S(i)$ being its $i$-{th} element, $\forall i \in [|S|]$.
When there is no confusion about the context, we often represent (an unordered) subset $S$ as a vector, or ordered subset, $S$ of size $|S|$ (according to, say, a fixed global ordering of all the items $[n]$). In this case, $S(i)$ denotes the item (member) at the $i$th position in subset $S$.   
$\bSigma_S = \{\sigma \mid \sigma$  is a permutation over items of $ S\}$, where for any permutation $\sigma \in \Sigma_{S}$, $\sigma(i)$ denotes the element at the $i$-{th} position in $\sigma, i \in [|S|]$.
$\1(\varphi)$ is generically used to denote an indicator variable that takes the value $1$ if the predicate $\varphi$ is true, and $0$ otherwise. %We use $\1_n$ to denote an $n$-dimensional vector of all $1$'s. 
%Given a set $[n]$, for any two $i,j \in [n]$, $i \succ j$ denotes the event $i$ is preferred over $j$.
$x \vee y$ denotes the maximum of $x$ and $y$, and $Pr(A)$ is used to denote the probability of event $A$, in a probability space that is clear from the context.

\subsection{Random Utility-based Discrete Choice Models}
\label{sec:RUM}  
%{\color{blue} Can be Removed/Shortened.}
A discrete choice model specifies the relative preferences of two or more discrete alternatives in a given set. %
%, such as selecting the most preferred item from a given set of item, rating top-$k$ movies of highest preferences, ranking candidates in an election etc \cite{Ben85}. 
%We here use it for the purpose of modelling ``winner selection probabilities" -- given a choice set $[n]$, $S \subseteq [n]$ and $i \in S$, the probability of selecting $i$ as the winner of $S$, denoted by $Pr(i|S)$. {\color{red} We denote a choice model uniquely by its choice probability function $Pr$}. 
%
%{\bf Random Utility Models (RUM).} 
{Random Utility Models} (RUMs) are a widely-studied class of discrete choice models; they assume a (non-random) ground-truth utility score $\theta_{i} \in \R$ for each alternative $i \in [n]$, and assign a distribution $\cD_i(\cdot|\theta_{i})$ for scoring item $i$, where $\E[\cD_i \mid \theta_i] = \theta_i$. To model a winning alternative given any set $S \subseteq [n]$, one first draws a random utility score $X_{i} \sim \cD_i(\cdot|\theta_{i})$ for each alternative in $S$, and selects an item with the highest random score. More formally, the probability that an item $i \in S$ emerges as the {\em winner} in set $S$ is given by:
\vspace{-1pt}
\begin{align}
\label{eq:prob_rum}
Pr(i|S) = Pr(X_i > X_j ~~\forall j \in S\sm \{i\} )
\end{align}

In this paper, we assume that for each item $i \in [n]$, its random {\em utility score} $X_i$ is of the form $X_i = \theta_i + \zeta_i$, where all the $\zeta_i \sim \cD$ are `noise' random variables drawn independently from a probability distribution $\cD$. 
%(i.e. $\E_{\cD}[\zeta] = 0$)
%Without loss of generality we also assume $\theta \in [0,1]$, and let $\theta_1 > \theta_i \, \forall i \in [n]\setminus\{1\}$ \footnote{assuming, of course, this ordering is unknown to the learner}. 
%We define a \emph{best item} to be one with the highest score parameter: $i^* \in \underset{i \in [n]}{\text{argmax}}~\theta_i$; thus $i^* = 1$. %Under the assumptions above, $i^* = 1$ uniquely. Note that here we have $p_{1i} = P(1\succ i) > \frac{1}{2}$, $\forall i \in [n]\setminus\{1\}$, so item $1$ is the \emph{Condorcet Winner} \citep{Ramamohan+16} of the PL model.

A widely used RUM is the {\it Multinomial-Logit (MNL)} or {\it Plackett-Luce model (PL)}, where the $\cD_i$s are taken to be independent Gumbel$(0,1)$ distributions with location parameters $0$ and scale parameter $1$ \citep{Az+12}, which results in score distributions $Pr(X_i \in  [x,x + dx]) = e^{-(x - \theta_i)}e^{-e^{-(x - \theta_i)}} dx$, $\forall i \in [n]$. Moreover, it can be shown that the probability that an alternative $i$ emerges as the winner in any set $S \ni i$ is simply proportional to its score parameter:
$%\begin{align}
%\label{eq:prob_PL}
Pr(i|S) = \frac{e^{\theta_i}}{\sum_{j \in S}e^{\theta_j}}.
$%\end{align}

Other families of discrete choice models can be obtained by imposing different probability distributions over the iid noise $\zeta_i \sim \cD$; e.g., 

\begin{enumerate}
\item \xn\, noise: $\cD$ is the Exponential$(\lambda)$ distribution ($\lambda > 0$).
\item Noise from \textit{Extreme value distributions}: $\cD$ is the Extreme-value-distribution$(\mu,\sigma,\xi)$ ($\mu \in \R, \sigma > 0, \xi \in \R$). Many well-known distributions fall in this class, e.g., \textit{Frechet, Weibull, Gumbel}. For instance, when $\chi = 0$, this reduces to the \gn$(\mu,\sigma)$ distribution.
\item \un\, noise: $\cD$ is the (continuous) Uniform$(a,b)$ distribution ($a,b \in \R, b > a$).
\item \nn\, or Frechet, Weibull, Gumbel noise: $\cD$ is the Gaussian$(\mu,\sigma)$ distribution ($\mu \in \R, \sigma > 0$).
\item \textit{Gamma} noise: $\cD$ is the Gamma$(k,\xi)$ distribution (where $k,\xi > 0$).
\end{enumerate}
Other distributions $\cD$ can alternatively be used for modelling the noise distribution , depending on desired tail properties, domain-specific information, etc.

Finally, we denote a RUM choice model, comprised of an instance $\btheta = (\theta_1,\theta_2,\ldots, \theta_n)$ (with its implicit dependence on the noise distribution $\cD$)  along with a playable subset size $k \leq n$, by \rumk.

%\vspace*{-17pt}

\section{Problem Setting}
\label{sec:prob}
We consider the probably approximately correct (PAC) version of the sequential decision-making problem of finding the {best item} in a set of $n$ items, by making only subset-wise comparisons. %
%--a generalization of famously studied PAC \emph{Dueling Bandit} problem \cite{Zoghi+14RUCB,Busa_pl} which learns the best item from pairwise comparisons.

Formally, the learner is given a finite set $[n]$ of $n > 2$ items or `arms'\footnote{terminology borrowed from multi-armed bandits} along with a playable subset size $k \leq n$. At each decision round $t = 1, 2, \ldots$, the learner selects a subset $S_t \subseteq [n]$ of $k$ 
distinct items, and receives (stochastic) feedback depending on (a) the chosen subset $S_t$, and (b) a \rumk\, choice model with parameters $\btheta = (\theta_1,\theta_2,\ldots, \theta_n)$ a priori unknown to the learner. The nature of the feedback can be of several types as described in Section \ref{sec:feed_mod}. For the purposes of analysis, we assume, without loss of generality\footnote{under the assumption that the learner's decision rule does not contain any bias towards a specific item index}, that $\theta_1 > \theta_i \, \forall i \in [n]\setminus\{1\}$ for ease of  exposition\footnote{The extension to the case where several items have the same highest parameter value is easily accomplished.}. We define a \emph{best item} to be one with the highest score parameter: $i^* \in \underset{i \in [n]}{\text{argmax}}~\theta_i = \{1\}$, under the assumptions above.

\begin{rem}
%Note that for any $i > 1$, we have $p_{1i} = P(1\succ i) = Pr(X_1 > X_i) = Pr\big(\zeta_1 - \zeta_i > (\theta_i - \theta_1) \big) > Pr\big(\zeta_1 - \zeta_i > 0 \big) \ge \frac{1}{2}$ as by assumption $(\theta_i - \theta_1) < 0, \forall i \in [n]\sm \{1\}$, and both $\zeta_1,\zeta_i \sim \cD$ are independent and identically distributed random variables this implies the random variable $\E[\zeta_1 - \zeta_i] = 0$. 
Under the assumptions above, it follows that item $1$ is the \emph{Condorcet Winner} \cite{Zoghi+14RUCB} for the underlying pairwise preference model induced by \rumk.
\end{rem}

%\blue{Check the argument for the above claim for $Pr\big(\zeta_1 - \zeta_i > 0 \big) \ge \frac{1}{2}$; is there a better proof??}

%\textbf{`Best' Ranking.} We define the \textit{`Best' Ranking} by ordering the items according to decreasing score parameters $\btheta = (\theta_1,\theta_2,\ldots, \theta_n)$; i.e. the \textit{`Best' Ranking} is defined as $\bsigma^* \in \bSigma_{[n]}$ such that $\theta_{\sigma^*(i)} > \theta_{\sigma^*(j)}$ if $\theta_i > \theta_j$, for all $i,j \in [n]$. Clearly, with our above assumption of $\theta_1 > \theta_2 \ge \ldots \ge \theta_n$, we have $\bsigma^* = (1, 2, \ldots, n)$. 

\subsection{Feedback models}
\label{sec:feed_mod}
We mean by `feedback model' the information received (from the `environment') once the learner plays a subset $S \subseteq [n]$ of $k$ items. Similar to different types of feedback models introduced  earlier in the context of the specific Plackett-Luce RUM \cite{SGwin18}, we consider the following feedback mechanisms:

\begin{itemize}
	
	\item \textbf{Winner of the selected subset (WI:} 
	The environment returns a single item $I \in S$, drawn independently from the probability distribution
	$%\begin{align}
	\label{eq:prob_win}
	Pr(I = i|S) = Pr(X_i > X_j, ~\forall j \in S\sm \{i\} ) ~~\forall i \in S, \, S \subseteq [n].
	$%\end{align} 
	
	\item \textbf{Full ranking selected subset of items (FR):} The environment returns a full ranking $\bsigma \in \bSigma_{S}$, drawn from the probability distribution
	$
	\label{eq:prob_rnk1}
	Pr(\bsigma = \sigma|S) = \prod_{i = 1}^{|S|} Pr(X_{\sigma(i)} > X_{\sigma(j)}, ~\forall j \in \{i+1, \ldots |S|\}), \, \forall \bsigma \in \bSigma_S.
	$ 
	In fact, this is equivalent to picking $\bsigma(1)$ according to the winner feedback from $S$, then picking $\bsigma(2)$ from $S \setminus \{\bsigma(1)\}$ following the same feedback model, and so on, until all elements from $S$ are exhausted, or, in other words, successively sampling $|S|$ winners from $S$ according to the \rumk\, model, without replacement.
		
\end{itemize}

\subsection{PAC Performance Objective: Correctness and Sample Complexity} 
\label{sec:obj}
%\begin{itemize}
%\noindent
%\textbf{Probably Approximately Correct (PAC):} 

For a \rumk\, instance with $n \geq k$ arms, an arm $i \in [n]$ is said to be $\epsilon$-optimal if $\theta_i > \theta_1 - \epsilon$. A sequential\footnote{We essentially mean a causal algorithm that makes present decisions using only past observed information at each time; the technical details for defining this precisely are omitted.} learning algorithm that depends on feedback from an appropriate subset-wise feedback model is said to be $(\epsilon,\delta)$-{PAC}, for given constants $0 < \epsilon \leq \frac{1}{2}, 0 < \delta \leq 1$, if the following properties hold when it is run on any instance \rumk:
%
%Suppose $\btheta \equiv (\theta_1, \ldots, \theta_n)$ and $k \leq n$ define mean utilities on the underlying RUM instance with best arm $i^* := \arg\max_{i \in [n]}\theta_i = 1$, and $0 < \epsilon \leq \frac{1}{2}, 0 < \delta \leq 1$ are given constants. An arm $i \in [n]$ is said to be $\epsilon$-optimal if $\theta_i > \theta_1 - \epsilon$. A sequential algorithm that operates on this problem instance, using feedback from an appropriate subset-wise feedback model (e.g., WI, FR or TR), is said to be $(\epsilon,\delta)$-{PAC} if 
(a) it stops and outputs an arm $I \in [n]$ after a finite number of decision rounds (subset plays) with probability $1$, and (b) the probability that its output $I$ is an $\epsilon$-optimal arm in \rumk\, is at least $1-\delta$, i.e, $Pr(\text{$I$ is $\epsilon$-optimal}) \geq 1-\delta$. Furthermore, by {\em sample complexity} of the algorithm, we mean the expected time (number of decision rounds) taken by the algorithm to stop when run on the instance \rumk.

%\input{subopt_alg.tex}

%\vspace*{-17pt}

\section{Connecting Subsetwise preferences to Pairwise Scores}
\label{sec:pair_pref}

In this section, we introduce the key concept of Advantage ratio as a means to systematically relate subsetwise preference observations to pairwise scores in general RUMs.
%
%In this section we analyse the ratio of winning probabilities (defined as \ratio) of two items in a fixed subset: % generated via rank breaking.

Consider any set $S \subseteq [n], \, |S| = k$, and recall that the probability of item $i$ winning in $S$ is $Pr(i|S) := Pr(X_i > X_j, ~\forall j \in [n]\sm \{i\} )$ for all $i \in S,\, S \subseteq [n]$. For any two items $i,j \in [n]$, let us denote $\Delta_{ij} = (\theta_i - \theta_j)$. Let us also denote by $f(\cdot), F(\cdot)$ and $\bar F(\cdot)$ the probability density function\footnote{We assume by default that all noise distributions have a density; the extension to more general noise distributions is left to future work.}, cumulative distribution function and complementary cumulative distribution function of the noise distribution $\cD$, respectively; thus, $F(x) = \int_{-\infty}^{x}f(x)dx$ for any $x \in $ Support$(\cD)$ and $\bar F(x) = \int_{x}^{\infty}f(x)dx = 1-F(x)$ for any $x \in $ Support$(\cD)$.

We now introduce and analyse the \ratio\, (Def. \ref{def:rat}); we will see in Sec. \ref{sec:algo_wi} how this quantity helps us deriving an improved sample complexity guarantee for our $(\epsilon,\delta)$-{PAC} item identification problem. 

\begin{defn}[Advantage ratio and Minimum advantage ratio]
\label{def:rat}
Given any subsetwise preference model defined on $n$ items, we define the advantage ratio of item $i$ over item $j$ within the subset $S \subseteq [n]$, $i,j \in S$ as
\begin{equation*}
%\label{eq:ratio}
\text{\ratio}(i,j,S) = \frac{Pr(i|S)}{Pr(j|S)}. %= \frac{Pr(X_i > X_j, ~\forall j \in [n]\sm \{i\} )}{Pr(X_i > X_j, ~\forall j \in [n]\sm \{i\} )}.
\end{equation*}

Moreover, given a playable subset size $k$, we define the minimum advantage ratio, \mrat, of item-$i$ over $j$, as the least advantage ratio of $i$ over $j$ across size-$k$ subsets of $[n]$, i.e., 
\begin{equation}
\label{eq:mrat}
\text{\mrat}(i,j) = \min_{S \subseteq [n],  |S| = k, S \ni i,j}\frac{Pr(i|S)}{Pr(j|S)}.
\end{equation}
\end{defn}

The key intuition here is that when $\text{\mrat}(i,j)$ does not equal $1$, it serves as a distinctive measure for identifying item $i$ and $j$ separately irrespective of the context $S$. We specifically build on this intuition later in Sec. \ref{sec:algo_wi} to propose a new algorithm (Alg. \ref{alg:pp}) which finds the $(\epsilon,\delta)$-PAC best item relying on the unique distinctive properly of the best-item $\theta_1 > \theta_j \forall j \in [n]\sm\{1\}$ (as described in Sec. \ref{sec:prob}).

%Though for any general subsetwise preference model we can not  reduce the \mrat$(i,j)$\, to a further simpler term, 
The following result shows a variational lower bound, in terms of the noise distribution, for the minimum advantage ratio in a \rumk\, model with independent and identically distributed (iid) noise variables, that is often amenable to explicit calculation/bounding. %
%
%for our specific \rumk\, models, one can derive an `easily computable' lower bound for \mrat$(i,j)$, mainly due to the \iidn\, assumption of the above model. 
%
%We will see later (Sec. \ref{sec:algo_wi}) how it proves to be rather effective for the analysis of our proposed algorithm.

%\vspace*{-7pt}

\begin{figure}[t]
	\begin{center}
	\hspace{-15pt}
		\includegraphics[trim={0cm 1pt 0cm 0},clip,scale=0.4,width=0.5\textwidth]{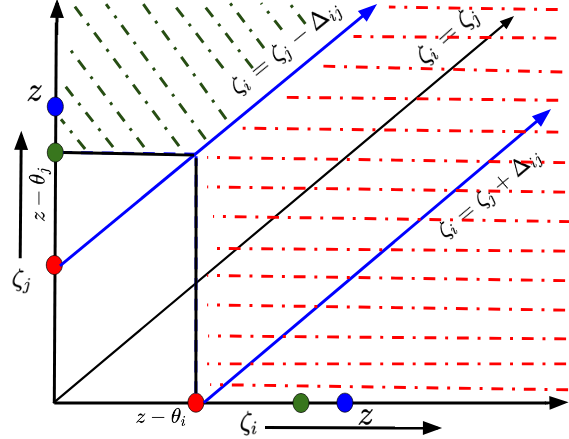}
				\vspace{-5pt}
		\caption{The geometrical interpretation behind \mrat$(i,j)$ for a fixed $z$: The green shaded region is where $X_j > \max(X_i,z)$, the red shaded region is where $X_i > \max(X_j,z)$, and the white rectangle is where $\max(X_i,X_j) < z$. Note how the shape of the green and red region varies as $z$ (blue dot) moves on the real line $\R$ (X-axis).}
		\label{fig:ar_ij}
		\vspace{-5pt}
	\end{center}
\end{figure}
\vspace{0pt}

\begin{restatable}[Variational lower bound for the advantage ratio]{lem}{thmrat}
\label{thm:ratio}
For any \rumk\, based subsetwise preference model and any item pair $(i,j)$,\footnote{We assume $\frac{0}{0}$ to be $\infty$ in the right hand side of Eqn. \ref{eq:mat_z}.} % such that $\theta_i > \theta_j$:
\begin{equation}
\label{eq:mat_z}
\text{\mrat}(i,j) \ge \min_{z \in \R}\frac{Pr\big(X_i > \max(X_j,z))}{Pr(X_j > \max(X_i,z)\big)}. 
\end{equation}

%\red{can we claim $k \in $ Support$(\cD)$?} 
%where $\cR : = \max_{r \in S \sm \{i,j\}} \theta_r$ + Support$(\cD) \subseteq \R$, is the support of the maximum scores of the items in $[n]$ except $i$ and $j$.
%distribution $\cD$ shifted by $\max_{i \in S \sm \{i,j\}}\theta$.

Moreover for \rumk\, models one can show that for any triplet $(i,j,S)$, $Pr\big(X_i > \max(X_j,z)) = F(z-\theta_j)\bar F(z-\theta_i) + \int_{z - \theta_j}^{\infty}\bar F(x - \Delta_{ij})f(x)dx$, which further lower bounds \mrat$(i,j)$ by:
\begin{equation*}
{
\label{eq:mat_z2}
\min_{z \in \R}\frac{F(z-\theta_j)\bar F(z-\theta_i) + \int_{z - \theta_j}^{\infty}\bar F(x - \Delta_{ij})f(x)dx}{F(z-\theta_i)\bar F(z-\theta_j) + \int_{z - \theta_i}^{\infty}\bar F(x + \Delta_{ij})f(x)dx}}.
\end{equation*}
\end{restatable}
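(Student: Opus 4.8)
The plan is to prove the variational lower bound in two stages, mirroring the two displayed inequalities in the statement. For the first stage, fix any size-$k$ subset $S \ni i,j$ and write $T = S \setminus \{i,j\}$. The idea is to condition on the maximum utility $Z := \max_{\ell \in T} X_\ell$ among the ``other'' items in $S$ (with the convention $Z = -\infty$ if $T = \emptyset$, recovering the pure pairwise case). Conditioned on $\{Z = z\}$, item $i$ wins $S$ exactly when $X_i > X_j$ and $X_i > z$, i.e. $X_i > \max(X_j, z)$; similarly for $j$. Hence, writing $\mu_Z$ for the law of $Z$,
\begin{equation*}
\frac{Pr(i \mid S)}{Pr(j \mid S)} = \frac{\int Pr\big(X_i > \max(X_j,z)\big)\, d\mu_Z(z)}{\int Pr\big(X_j > \max(X_i,z)\big)\, d\mu_Z(z)} \ge \min_{z \in \R} \frac{Pr\big(X_i > \max(X_j,z)\big)}{Pr\big(X_j > \max(X_i,z)\big)},
\end{equation*}
where the last step is the elementary fact that a ratio of averages of two nonnegative functions against a common measure is at least the pointwise infimum of their ratio (the ``mediant inequality'': if $a_z / b_z \ge m$ for all $z$ then $\int a_z\, d\mu / \int b_z\, d\mu \ge m$). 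Taking the minimum over all admissible $S$ on the left then yields the claimed bound on $\text{\mrat}(i,j)$, and crucially the right-hand side no longer depends on $S$ or $k$ at all.

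For the second stage, I would just compute $Pr\big(X_i > \max(X_j, z)\big)$ explicitly under the iid-noise \rumk\ structure, where $X_i = \theta_i + \zeta_i$, $X_j = \theta_j + \zeta_j$ with $\zeta_i, \zeta_j \sim \cD$ independent. Splitting on whether $X_j > z$ or $X_j \le z$: on the event $\{X_j \le z\}$ we need $X_i > z$, contributing $Pr(X_j \le z) Pr(X_i > z) = F(z - \theta_j)\bar F(z - \theta_i)$; on the event $\{X_j > z\}$ we need $X_i > X_j$, and integrating over $X_j = \theta_j + \zeta_j$ with density $f(x - \theta_j)$ for $x$ ranging over $(z, \infty)$ gives $\int_z^\infty \bar F(x - \theta_i) f(x - \theta_j)\, dx$; the change of variables $x \mapsto x + \theta_j$ turns this into $\int_{z - \theta_j}^\infty \bar F(x - \Delta_{ij}) f(x)\, dx$, matching the stated formula (modulo the shift $z \mapsto z + \theta_j$, which is harmless since we minimize over all $z \in \R$). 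Plugging this expression and the symmetric one for the denominator into the first-stage bound gives the final display. The edge cases ($T = \emptyset$, or probabilities equal to zero, handled by the stated $\tfrac{0}{0} := \infty$ convention) should be remarked on but are routine.

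The main obstacle, such as it is, is getting the conditioning argument in the first stage airtight: one has to be a little careful that ``$i$ wins $S$'' decomposes cleanly given $Z = z$ — in particular that ties occur with probability zero (guaranteed since $\cD$ has a density) and that the independence of $\zeta_i, \zeta_j$ from $(\zeta_\ell)_{\ell \in T}$ is what lets us pull $Pr(X_i > \max(X_j, z))$ out as a function of $z$ alone inside the integral. Everything after that — the mediant inequality and the density computation — is elementary. I would also note in passing that the geometric picture in Figure \ref{fig:ar_ij} is exactly this decomposition: for fixed $z$, the red region $\{X_i > \max(X_j, z)\}$ and green region $\{X_j > \max(X_i, z)\}$ are the two events whose probability ratio is being minimized, and the white rectangle $\{\max(X_i, X_j) < z\}$ is the ``neither'' event that is common to numerator and denominator's complements.
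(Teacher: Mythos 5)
Your proposal is correct and follows essentially the same route as the paper's own proof: conditioning on the maximum utility of the remaining items in $S$, applying the mediant inequality to the resulting ratio of integrals, and then computing $Pr(X_i > \max(X_j,z))$ by splitting on whether $X_j$ exceeds $z$. Your treatment of the edge cases (empty $T$, the $\tfrac{0}{0}$ convention, and the role of independence) is, if anything, slightly more careful than the paper's.
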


The proof of the result appears in Appendix \ref{app:ar_z}.
Fig. \ref{fig:ar_ij} shows a geometrical interpretation behind \mrat$(i,j)$, under the joint realization of the pair of values $(\zeta_i,\zeta_j)$.

\begin{rem}
Suppose $\bar S := \arg\min_{|S| = k, i,j \in S}\frac{Pr(i|S)}{Pr(j|S)}$. It is sufficient to consider the domain of $z$ in the right hand side of (\ref{eq:mat_z}) to be just the set $\max_{r \in \bar S \sm \{i,j\}}\theta_r$ + support$(\cD)$, as the proof of Lemma \ref{thm:ratio} brings out. However, for simplicity we use a smaller lower bound in Eqn. \ref{eq:mat_z} and take $z \in R$.
\end{rem}

%\red{A figure to illustrate the ratio of areas in 2D?}

We next derive the \mrat$(i,j)$\ values certain specific noise distributions:

\begin{restatable}[Analysing \mrat\, for specific noise models]{lem}{lemrat}
\label{lem:ratio}
Given a fixed item pair $(i,j)$ such that $\theta_i > \theta_j$, the following bounds hold under the respective noise models in an iid RUM.

\begin{enumerate}
\item \xn($\lambda$): \mrat$(i,j) \ge e^{\Delta_{ij}} > 1 + \Delta_{ij}$ for Exponential noise with $\lambda=1$. 
\item \textit{Extreme value distribution}$(\mu,\sigma,\chi)$: For \gn$(\mu,\sigma)$ ($\chi = 0$) noise, \mrat$(i,j) = e^{\frac{\Delta_{ij}}{\sigma}} > 1 + \frac{\Delta_{ij}}{\sigma}$. 
\item \un$(a,b)$: \mrat$(i,j) \ge 1 +  \frac{2\Delta_{ij}}{b-a}$ for \un$(a,b)$ noise ($a,b \in \R, b > a,$ and $\Delta_{ij} < \frac{a}{2}$).
\item \textit{Gamma}$(k,\xi)$: \mrat$(i,j) \ge 1 + \Delta_{ij}$ for Gamma$(2,1)$ noise.
\item \textit{Weibull}$(\lambda,k)$: \mrat$(i,j) \ge e^{\lambda\Delta_{ij}} > 1 + \lambda\Delta_{ij}$ for $(k=1)$.
\item \textit{Normal} $\mathcal{N}(0,1)$: For $\Delta_{ij}$ small enough (in a neighborhood of $0$), \mrat$(i,j) \ge 1 + \frac{4}{3}\Delta_{ij}$.
\end{enumerate}
\end{restatable}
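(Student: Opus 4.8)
The strategy is to feed each specific noise distribution into the variational bound of Lemma~\ref{thm:ratio} and carry out the resulting one-dimensional optimization over $z$. For every case I would start from
\[
\text{\mrat}(i,j) \ge \min_{z \in \R}\frac{F(z-\theta_j)\bar F(z-\theta_i) + \int_{z - \theta_j}^{\infty}\bar F(x - \Delta_{ij})f(x)\,dx}{F(z-\theta_i)\bar F(z-\theta_j) + \int_{z - \theta_i}^{\infty}\bar F(x + \Delta_{ij})f(x)\,dx},
\]
substitute the relevant $f,F,\bar F$, and then either evaluate the ratio exactly (when it turns out to be $z$-independent) or lower-bound it uniformly in $z$. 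The final ``$>1+\Delta$''-type inequalities are then just elementary facts like $e^{x} > 1+x$ for $x\neq 0$.

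For the \emph{exponential}, \emph{Gumbel}, and \emph{Weibull} ($k=1$) cases, the key observation is that these are precisely the distributions whose induced RUM enjoys IIA-like closure (Gumbel gives Plackett--Luce exactly, exponential and Weibull$(\lambda,1)$ — which is just a scaled exponential — behave similarly), so the ratio $Pr(i\mid S)/Pr(j\mid S)$ is literally independent of $S$ and equals $e^{\Delta_{ij}}$ (resp.\ $e^{\Delta_{ij}/\sigma}$, $e^{\lambda\Delta_{ij}}$). Concretely, for exponential$(1)$ noise, $\bar F(x)=e^{-x}$ for $x\ge 0$, and a direct computation of $Pr(X_i > \max(X_j,z))$ using the closed form in Lemma~\ref{thm:ratio} shows both numerator and denominator carry the same $z$-dependent factor, which cancels, leaving $e^{\Delta_{ij}}$; the $\min_z$ is then trivial. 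Gumbel is the cleanest since $Pr(i\mid S)=e^{\theta_i/\sigma}/\sum_{r\in S}e^{\theta_r/\sigma}$ is already known, so \mrat\ is exactly $e^{\Delta_{ij}/\sigma}$ with no optimization needed at all.

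For the \emph{uniform}, \emph{Gamma}$(2,1)$, and \emph{normal} cases there is no such cancellation, and here I expect the real work — and the main obstacle — to lie. For uniform$(a,b)$ one plugs in the piecewise-linear $F$ and the quadratic integrals; the ratio becomes a ratio of quadratics in $z$, and one must show it is minimized (over the relevant range of $z$, which by the Remark after Lemma~\ref{thm:ratio} can be restricted to a bounded interval) at a value $\ge 1 + \tfrac{2\Delta_{ij}}{b-a}$, using the stated side condition $\Delta_{ij} < a/2$ to keep the relevant densities positive and the algebra valid. For Gamma$(2,1)$, $f(x)=xe^{-x}$ and $\bar F(x)=(1+x)e^{-x}$ on $x\ge0$, so the integrals $\int \bar F(x\mp\Delta)f(x)dx$ are elementary but messy; after simplification one shows the ratio exceeds $1+\Delta_{ij}$, likely again by locating the worst-case $z$ (probably $z\to\infty$ or the boundary of the support) and then reducing to a scalar inequality. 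For the normal case, no closed form exists, so the plan is a local (Taylor) analysis: expand the ratio in Lemma~\ref{thm:ratio} to first order in $\Delta_{ij}$ around $\Delta_{ij}=0$, minimize the leading linear coefficient over $z$ (this requires knowing where the Gaussian $f,\bar F$ product-integral is extremized, which is the delicate step), and show that minimal slope is $\tfrac{4}{3}$ — hence \mrat$(i,j)\ge 1+\tfrac{4}{3}\Delta_{ij}$ for $\Delta_{ij}$ in a neighborhood of $0$. The recurring technical difficulty across all three is establishing that the one-dimensional function of $z$ attains its minimum where the analysis claims, so I would handle each by differentiating in $z$, checking the sign, and invoking the bounded-domain reduction from the Remark to rule out pathologies at infinity.
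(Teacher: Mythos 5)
Your plan is essentially the paper's own proof: for cases 1--5 the paper likewise just substitutes each distribution's $f, F, \bar F$ into the variational bound of Lemma~\ref{thm:ratio} and minimizes over $z$ (giving no more detail than you do), and for the Gaussian it performs exactly the first-order Taylor expansion in $\Delta_{ij}$ you describe, locating the minimizer $z_* \approx \Delta_{ij}/2$ and extracting the slope $\tfrac{4}{3}$. One small correction to your reasoning: for \emph{additive} Exponential (and Weibull $k=1$) noise the ratio $Pr(i|S)/Pr(j|S)$ is \emph{not} independent of $S$ --- that exact IIA cancellation is special to Gumbel --- and the computation instead yields, for $z \ge \theta_i$, the ratio $e^{\Delta_{ij}}\cdot\frac{1-v/2}{1-v e^{\Delta_{ij}}/2}$ with $v=e^{-(z-\theta_j)}$, which is uniformly $\ge e^{\Delta_{ij}}$ with the infimum approached only as $z\to\infty$, so the stated bound still follows from your plan.
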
 

Proof is given in Appendix \ref{app:ar_cases}.

%\begin{restatable}[Rank-Breaking Update]{lem}{lemrb}
%\label{lem:rb}
%\end{restatable} 
%\input{pairwise_prefs_AG.tex}

\section{An optimal algorithm for the  winner feedback model}
\label{sec:wi}
In this section, we propose an algorithm (\algpp, Algorithm  \ref{alg:pp}) for the $(\epsilon,\delta)$-{PAC} objective with winner feedback. We then analyse its correctness and sample complexity guarantee (Theorem  \ref{thm:alg_pp}) for any noise distribution $\cD$ (under a mild assumption of its being \mrat\, bounded away from $1$). Following this, we also prove a matching lower bound for the problem which shows that the sample complexity of Algorithm \algpp\, is unimprovable (up to a factor of $\log k$).

\subsection{The \algpp\, algorithm}
\label{sec:algo_wi}
Our algorithm is based on the simple idea of dividing the set of $n$ items into sub-groups of size $k$, querying each subgroup `sufficiently enough', retaining thereafter only the empirically `{strongest item}' of each sub-group, and recursing on the remaining set of items until only one item remains.

More specifically, it starts by partitioning the initial item pool into $G: = \lceil \frac{n}{k} \rceil$ mutually exclusive and exhaustive sets $\cG_1, \cG_2, \cdots \cG_G$ such that $\cup_{j = 1}^{G}\cG_j = S$ and $\cG_{j} \cap \cG_{j'} = \emptyset, ~\forall j,j' \in [G]\, |G_j| = k,\, \forall j \in [G-1]$. 
Each set $\cG_g, \, g \in [G]$ is then queried for $t= O\Big( \frac{k}{\epsilon_\ell^2}\ln \frac{k}{\delta_\ell} \Big)$ rounds, and only the `empirical winner' $c_g$ of each group $g$ is retained in a set $S$, rest are discarded. The algorithm next recurses the same procedure on the remaining set of surviving items, until a single item is left, which then is declared to be the $(\epsilon,\delta)$ {PAC-best} item. Algorithm \ref{alg:pp} presents the pseudocode in more detail.

\noindent {\bf Key idea:} 
The primary novelty here is how the algorithm reasons about the `{strongest item}' in each sub-group $\cG_g$: It maintains the pairwise preferences of every item pair $(i,j)$ in any sub-group $\cG_g$ and simply chooses the item that beats the rest of the items in the sub-group with a positive advantage of greater than $\frac{1}{2}$ (alternatively, the item that wins maximum number of subset-wise plays). Our idea of maintaining pairwise preferences is motivated by a similar algorithm proposed in \cite{SGwin18}; however, their performance guarantee applies to only the very specific class of Plackett-Luce feedback models, whereas the novelty of our current analysis reveals the power of maintaining pairwise-estimates for more general \rumk\, subsetwise model (which includes the Plackett-Luce choice model as a special case). The pseudo code of \algpp\, is given in Alg. \ref{alg:pp}.

\begin{center}
\begin{algorithm}[t]
   \caption{\textbf{\algpp}(\algp)}
   \label{alg:pp}
\begin{algorithmic}[1]
   \STATE {\bfseries Input:} 
   \STATE ~~~ Set of items: $[n]$, Subset size: $n \geq k > 1$
   \STATE ~~~ Error bias: $\epsilon >0$, Confidence parameter: $\delta >0$
   \STATE ~~~ Noise model $(\cD)$ dependent constant $c > 0$
   \STATE {\bfseries Initialize:} 
   \STATE ~~~ $S \leftarrow [n]$, $\epsilon_0 \leftarrow \frac{c\epsilon}{8}$, and $\delta_0 \leftarrow \frac{\delta}{2}$  
   \STATE ~~~ Divide $S$ into $G: = \lceil \frac{n}{k} \rceil$ sets $\cG_1, \cG_2, \ldots, \cG_G$ such that $\cup_{j = 1}^{G}\cG_j = S$ and $\cG_{j} \cap \cG_{j'} = \emptyset, ~\forall j,j' \in [G]$, where $|G_j| = k,\, \forall j \in [G-1]$
   \STATE ~~~ \textbf{If} $|\cG_{G}| < k$, \textbf{then} set $\cR_1 \leftarrow \cG_G$  and $G = G-1$
   %\REPEAT  
   \WHILE{$\ell = 1,2, \ldots$}
   \STATE Set $S \leftarrow \emptyset$, $\delta_\ell \leftarrow \frac{\delta_{\ell-1}}{2}, \epsilon_\ell \leftarrow \frac{3}{4}\epsilon_{\ell-1}$
   \FOR {$g = 1,2, \ldots, G$}
   \STATE Play the set $\cG_g$ for $t:= \big\lceil \frac{k}{2\epsilon_\ell^2}\ln \frac{k}{\delta_\ell} \big\rceil$ rounds
   \STATE $w_i \leftarrow$ Number of times $i$ won in $t$ plays of $\cG_g$, $\forall i \in \cG_g$
	%\STATE $i_{g} \leftarrow ~Median(\{w_{\cA}(i) \mid i \in \cG_g\})$
   %\STATE $S \leftarrow S \cup \{i \in \cG_g \mid w_{g}(i) \ge w_{g}(i_g)\}$
   \STATE Set $c_g \leftarrow \underset{i \in \cA}{{\arg\max}}~w_i$ and $S \leftarrow S \cup \{c_g\}$
   \ENDFOR
   \STATE $S \leftarrow S \cup \cR_\ell$
      \IF{$(|S| == 1)$}
   \STATE Break (go out of the while loop)
   \ELSIF{$|S|\le k$}
   \STATE $S' \leftarrow $ Randomly sample $k-|S|$ items from $[n] \setminus S$, and $S \leftarrow S \cup S'$, $\epsilon_\ell \leftarrow \frac{c\epsilon}{2}$, $\delta_\ell \leftarrow {\delta}$  
   \ELSE
   \STATE Divide $S$ into $G: = \lceil \frac{|S|}{k} \rceil$ sets $\cG_1, \cG_2, \ldots, \cG_G$, such that $\cup_{j = 1}^{G}\cG_j = S$, and $\cG_{j} \cap \cG_{j'} = \emptyset, ~\forall j,j' \in [G]$, where $|G_j| = k,\, \forall j \in [G-1]$
   \STATE \textbf{If} $|\cG_{G}| < k$, \textbf{then} set $\cR_{\ell+1} \leftarrow \cG_G$  and $G = G-1$
   \ENDIF
   \ENDWHILE
%   \UNTIL{$(S == \emptyset)$}
   \STATE {\bfseries Output:} The unique item left in $S$
\end{algorithmic}
\end{algorithm}
\vspace{-2pt}
\end{center}

The following is our chief result; it  proves correctness and a sample complexity bound for Algorithm \ref{alg:pp}.

\begin{restatable}[\algpp: \hspace*{-5pt} Correctness and Sample Complexity]{thm}{ubpp}
\label{thm:alg_pp}
\hspace*{-7pt}
Consider any iid subsetwise preference model \rumk\, based on a noise distribution $\cD$, and suppose that for any item pair $i,j$, we have \mrat$(i,j) \ge 1 + \frac{4c\Delta_{ij}}{1-2c}$ for some $\cD$-dependent constant $c > 0$. Then, Algorithm \ref{alg:pp}, with input constant $c > 0$, is an $(\epsilon,\delta)$-{PAC} algorithm with sample complexity $O(\frac{n}{c^2\epsilon^2} \log \frac{k}{\delta})$.
\end{restatable}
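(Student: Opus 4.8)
The plan is to prove correctness and sample complexity by a phase-by-phase analysis of Algorithm~\ref{alg:pp}, tracking the "best surviving item" through the recursive rounds. First I would set up the error budget: the algorithm runs for at most $R = \lceil \log_k n \rceil$ phases, and at phase $\ell$ it uses confidence $\delta_\ell = \delta/2^{\ell+1}$ and accuracy $\epsilon_\ell = (3/4)^\ell \epsilon_0$ with $\epsilon_0 = c\epsilon/8$. Summing $\sum_\ell \delta_\ell \le \delta$ handles the failure probability; the geometric decay of $\epsilon_\ell$ combined with $\sum_\ell (3/4)^\ell < 4$ is what keeps the total accuracy loss bounded, so that if item $i$ survives all phases then $\theta_1 - \theta_i$ is controlled by roughly $\sum_\ell \epsilon_\ell / c = O(\epsilon)$ — this is the standard "halving/trisecting" trick for PAC elimination.

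The heart of the argument is a single-phase guarantee: in a subgroup $\cG_g$ of size $k$ played $t = \lceil \frac{k}{2\epsilon_\ell^2}\ln\frac{k}{\delta_\ell}\rceil$ times with winner feedback, I want to show that with probability at least $1 - \delta_\ell$, the empirical winner $c_g$ satisfies $\theta_{c_g} > \theta_{b_g} - \epsilon_\ell/c$ where $b_g = \arg\max_{i \in \cG_g}\theta_i$ is the true best in the group. The key idea is to reduce the multiwise winner counts to \emph{pairwise} empirical advantages. Here is where Lemma~\ref{thm:ratio}/\ref{lem:ratio} and the hypothesis $\text{\mrat}(i,j) \ge 1 + \frac{4c\Delta_{ij}}{1-2c}$ enter: for any pair $i,j \in \cG_g$ with $\theta_i - \theta_j = \Delta_{ij} > \epsilon_\ell/c$, the \mrat\ bound forces $Pr(i \mid \cG_g)/Pr(j \mid \cG_g) \ge 1 + \frac{4c\Delta_{ij}}{1-2c} > 1 + \frac{4\epsilon_\ell}{1-2c}$, which (after rearranging, using $Pr(i\mid\cG_g) + Pr(j\mid\cG_g) \le 1$) translates to a constant-sized gap — on the order of $\epsilon_\ell$ — between $Pr(i\mid\cG_g)$ and $Pr(j\mid\cG_g)$, or more precisely in the \emph{conditional} pairwise win probability $p_{ij} := Pr(i\mid\cG_g)/(Pr(i\mid\cG_g)+Pr(j\mid\cG_g)) \ge \frac12 + \epsilon_\ell$. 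Then a Chernoff/Hoeffding bound on the pairwise comparison counts (each of the $t$ plays that returns $i$ or $j$ is a Bernoulli($p_{ij}$) trial, and by a standard argument there are $\Omega(t/k)$ such informative plays in expectation, absorbed into the choice $t \propto k/\epsilon_\ell^2$), together with a union bound over the $\le \binom{k}{2}$ pairs, shows that every item more than $\epsilon_\ell/c$-suboptimal within $\cG_g$ loses its pairwise battle against $b_g$ and hence cannot be selected as $c_g$. Thus $c_g$ is $(\epsilon_\ell/c)$-optimal in $\cG_g$ w.h.p.

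Granting the single-phase claim, correctness follows by induction: with probability $\ge 1-\delta$, across all phases the globally best item $1$ (or some item within the accumulated slack of it) always survives into the next round, because at every phase the best current survivor's group-winner is only $\epsilon_\ell/c$ worse, and these slacks telescope to $\frac{1}{c}\sum_\ell \epsilon_\ell \le \frac{4\epsilon_0}{c} = \frac{\epsilon}{2} < \epsilon$ (the extra factor and the special last-phase settings $\epsilon_\ell \leftarrow c\epsilon/2, \delta_\ell \leftarrow \delta$ when $|S| \le k$ are there to close the final gap cleanly). For the sample complexity, the number of subset plays in phase $\ell$ is $G_\ell \cdot t_\ell$ where $G_\ell = \lceil |S_\ell|/k\rceil$ and $|S_\ell| \le n/k^{\ell-1}$, so $G_\ell t_\ell = O\big(\frac{|S_\ell|}{k}\cdot\frac{k}{\epsilon_\ell^2}\ln\frac{k}{\delta_\ell}\big) = O\big(\frac{n}{k^{\ell-1}\epsilon_\ell^2}\ln\frac{k}{\delta_\ell}\big)$; since $\epsilon_\ell^{-2} = (16/9)^\ell \epsilon_0^{-2}$ and $\epsilon_0 = c\epsilon/8$, the series $\sum_{\ell\ge 1} \frac{(16/9)^\ell}{k^{\ell-1}}$ converges (it is dominated by a geometric series once $k \ge 2$, with the $\ell=1$ term $O(\frac{n}{c^2\epsilon^2}\ln\frac{k}{\delta})$ dominating), giving the claimed $O\big(\frac{n}{c^2\epsilon^2}\log\frac{k}{\delta}\big)$ total.

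The main obstacle I anticipate is the reduction from subset-winner feedback to pairwise Bernoulli trials with a guaranteed number of informative samples: one must argue carefully that within $t$ plays of $\cG_g$, the number of rounds whose winner lies in $\{i,j\}$ concentrates around $t\cdot(Pr(i\mid\cG_g)+Pr(j\mid\cG_g))$, and that this quantity is not too small — which is exactly why the \mrat\ hypothesis (lower-bounding a \emph{ratio}, which is scale-free and survives conditioning on the pair) rather than a direct lower bound on $Pr(i\mid\cG_g)$ is the right structural property to assume; handling the case where $Pr(i\mid\cG_g)+Pr(j\mid\cG_g)$ is itself tiny requires noting that then \emph{both} are small and neither can be the empirical winner unless some third item fails, which is controlled separately. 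Making the constants line up so that $\epsilon_0 = c\epsilon/8$, the trisection factor $3/4$, and the $\frac{4c\Delta_{ij}}{1-2c}$ threshold all cooperate is bookkeeping, but getting the informative-sample-count concentration airtight is the real work.
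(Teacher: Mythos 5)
Your proposal is correct and follows essentially the same route as the paper's proof: phase-wise elimination with the geometric $\epsilon_\ell,\delta_\ell$ schedule, the conversion of the \mrat\ hypothesis into a conditional pairwise win probability $p_{ij|\cG_g} \ge \tfrac12 + \epsilon_\ell$ (the paper's Lemma \ref{lem:pp_1i}), a Hoeffding bound on pairwise counts with a union bound over the group, and the telescoping of the per-phase slack $\epsilon_\ell/c$. The one obstacle you flag as "the real work" --- concentration of the number of informative $\{i,j\}$-plays --- is dispatched in the paper by a deterministic pigeonhole observation: the empirical winner $c_g$ necessarily satisfies $w_{c_g} \ge t/k$, so $n_{c_g i_g} \ge t/k$ holds on the very event being bounded, and no separate concentration (or handling of pairs with small $Pr(i|\cG_g)+Pr(j|\cG_g)$) is needed.
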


The proof of the result appears in Appendix \ref{app:alg_pm_thm}.

\begin{rem}
The linear dependence on the total number of items, $n$, is, in effect, indicates the price to pay for learning the $n$ unknown model parameters $\btheta = (\theta_1, \ldots, \theta_n)$  which decide the subsetwise winning probabilities of the $n$ items. Remarkably, however, the theorem shows that the PAC sample complexity of the $(\epsilon,\delta)$-best item identification problem, with only winner feedback information from $k$-size subsets, is independent of $k$. One may expect to see improved sample complexity as the number of items being simultaneously tested in each round is large $(k \ge 2)$, but note that on the other side, the sample complexity could also worsen, since it is also harder for a good item to win and show itself in a few draws against a large population of $k-1$ other competitors -- these effects roughly balance each other out, and the final sample complexity only depends on the total number of items $n$ and the accuracy parameters $(\epsilon,\delta)$. 
\end{rem}

Note that Lemma \ref{lem:ratio} gives specific values of the noise-model $\cD$ dependent constant $c > 0$, using which we can derive specific sample complexity bounds for certain noise models: 

\begin{restatable}[Model specific correctness and sample complexity guarantees]{cor}{corpp}
\label{cor:alg_pp} 
For the following representative noise distributions: Exponential$(1)$, Gumbel$(\mu,\sigma)$ Gamma$(2,1)$, Uniform$(a,b)$, Weibull$(\lambda,1)$, Standard normal or Normal$(0,1)$, \algp\, (Alg.\ref{alg:pp}) finds an $(\epsilon,\delta)$-{PAC} item within sample complexity $O\big( \frac{n}{\epsilon^2}\ln \frac{k}{\delta} \big)$.
\end{restatable}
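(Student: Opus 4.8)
The plan is to combine Lemma \ref{lem:ratio} with Theorem \ref{thm:alg_pp}: Lemma \ref{lem:ratio} gives, for each of the listed noise families, an \emph{affine} lower bound on the minimum advantage ratio of the form $\text{\mrat}(i,j) \ge 1 + a\,\Delta_{ij}$ for an explicit, distribution-dependent slope $a>0$ (e.g.\ $a=1$ for Exponential$(1)$, Gamma$(2,1)$; $a=1/\sigma$ for Gumbel$(\mu,\sigma)$; $a=2/(b-a)$ for Uniform$(a,b)$; $a=\lambda$ for Weibull$(\lambda,1)$; $a=4/3$ for Normal$(0,1)$, valid for $\Delta_{ij}$ near $0$). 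The hypothesis of Theorem \ref{thm:alg_pp} requires a bound of the shape $\text{\mrat}(i,j) \ge 1 + \tfrac{4c\Delta_{ij}}{1-2c}$; so the first step is simply to solve $a = \tfrac{4c}{1-2c}$ for $c$, giving $c = \tfrac{a}{4+2a} \in (0,\tfrac12)$, which is a legitimate constant in the admissible range for each listed family. Plugging this specific $c$ into the sample-complexity bound $O(\tfrac{n}{c^2\epsilon^2}\log\tfrac{k}{\delta})$ of Theorem \ref{thm:alg_pp} and absorbing the constant $c$ (which depends only on $\cD$, not on $n,k,\epsilon,\delta$) into the $O(\cdot)$ yields the claimed $O(\tfrac{n}{\epsilon^2}\ln\tfrac{k}{\delta})$.

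The main steps, in order, are: (i) invoke Lemma \ref{lem:ratio} to record the affine lower bound $\text{\mrat}(i,j)\ge 1+a\Delta_{ij}$ for each of the six named distributions, noting the slope $a$; (ii) for each, set $c := \tfrac{a}{4+2a}$ and verify $0<c<\tfrac12$, so the affine bound can be rewritten as $\text{\mrat}(i,j) \ge 1 + \tfrac{4c\Delta_{ij}}{1-2c}$, matching exactly the premise of Theorem \ref{thm:alg_pp}; (iii) apply Theorem \ref{thm:alg_pp} with this $c$ to conclude \algp\ is $(\epsilon,\delta)$-PAC with sample complexity $O(\tfrac{n}{c^2\epsilon^2}\log\tfrac{k}{\delta})$; (iv) since $c$ is an absolute constant once $\cD$ is fixed, drop it into the hidden constant to get $O(\tfrac{n}{\epsilon^2}\ln\tfrac{k}{\delta})$.

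The only subtlety — and the step I would treat most carefully — is the Normal$(0,1)$ case, where Lemma \ref{lem:ratio} only guarantees the affine bound for $\Delta_{ij}$ in a neighborhood of $0$. One must check that this local validity is enough: in Algorithm \ref{alg:pp} the relevant gaps being resolved are those of size $\Theta(\epsilon_\ell) = \Theta(c\epsilon)$ or smaller (the algorithm only needs to separate items whose parameters differ by more than the current accuracy level), so for small enough $\epsilon$ all gaps in play lie in the neighborhood where the bound holds; alternatively one notes that $\epsilon\le\tfrac12$ and the relevant advantage comparisons concern $\epsilon$-scale gaps, keeping us inside the valid regime. A similar minor caveat (the side condition $\Delta_{ij}<\tfrac{a}{2}$) appears for Uniform$(a,b)$ and is handled the same way. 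Apart from that, the corollary is a direct substitution, so there is no real obstacle — it is essentially bookkeeping of constants.
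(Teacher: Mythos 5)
Your proposal is correct and follows essentially the same route as the paper: plug the distribution-specific affine slopes from Lemma \ref{lem:ratio} into Theorem \ref{thm:alg_pp} and absorb the resulting $\cD$-dependent constant $c$ into the $O(\cdot)$. If anything you are slightly more careful than the paper, which simply records $c = a/4$ for each family (matching the form $1+4c\Delta_{ij}$ rather than the theorem's stated hypothesis $1+\tfrac{4c\Delta_{ij}}{1-2c}$), whereas your choice $c=\tfrac{a}{4+2a}$ matches that hypothesis exactly; your handling of the local-validity caveats for the Normal and Uniform cases is also a reasonable addition.
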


\begin{proof}[Proof sketch]
The proof follows from the general performance guarantee of \algp\, (Thm. \ref{thm:alg_pp}) and Lem. \ref{lem:ratio}. More specifically from Lem. \ref{lem:ratio} it follows that the value of $c$ for these specific distributions are constant, which concludes the claim. For completeness the distribution-specific values of $c$ are given in Appendix \ref{app:alg_pp_cor}.
\end{proof} 

%\vspace*{-7pt}
\iffalse %%%%%%%%%%%%%%%%%%%%%%%%% ---------------------
\begin{proof}\textbf{(sketch)}.
The crucial observation here is that at any iteration $\ell$, for any set $\cG_g$ ($g = 1,2,\ldots G$), the item $c_g$ retained by the algorithm is likely to be not more than $\epsilon_\ell$-worse than the best item of the set $\cG_g$, with probability at least $(1-\delta_\ell)$. Precisely, we show that:

%\vspace*{-7pt}
\begin{restatable}[]{lem}{lemdiv}
\label{lem:div_bb} 
At any iteration $\ell$, for any $\cG_g$, if $i_g := \underset{i \in \cG_g}{{\arg\max}}~\theta_i$, then with probability at least $(1-\delta_\ell)$, $p_{c_gi_g} > \frac{1}{2} - \epsilon_\ell$.
\end{restatable} 

This guarantees that, between any two successive rounds $\ell$ and $\ell+1$, we do not lose out by more than an additive factor of $\epsilon_\ell$ in terms of highest score parameter of the remaining set of items. Aggregating this claim over all iterations can be made to show  that $p_{r_*1} > \frac{1}{2} - \epsilon$, as desired. 
The sample complexity bound follows by carefully summing the total number of times ($t= \frac{k}{2\epsilon_\ell^2}\ln \frac{k}{\delta_\ell}$) a set $\cG_g$ is played per iteration $\ell$, with the maximum number of possible iterations being $\lceil  \ln_k n \rceil$.
%The complete proof is given in Appendix \ref{app:wiub_div}.
\end{proof}
%\vspace*{-15pt}
\fi %%%%%%%%%%%%%%%%%%%%%%%%%%%%% --------------------

\subsection{Sample Complexity Lower Bound}

In this section we derive a sample complexity lower bound for any $(\epsilon,\delta)$-PAC algorithm for any \rumk\, model with \mrat$(i,j)$ strictly bounded away from $1$ in terms of $\Delta_{ij}$. Our formal claim goes as follows:

\begin{restatable}[Sample Complexity Lower Bound for \rumk\, model]{thm}{lbwin}
\label{thm:lb_pacpl_win}
Given $\epsilon \in (0,\frac{1}{4}]$, $\delta \in (0,1]$, $c > 0$  and an $(\epsilon,\delta)$-PAC algorithm $A$ with winner item feedback, there exists a \rumk\, instance $\nu$ with \mrat$(i,j) \ge 1 + 4c\Delta_{ij}$ for all $i,j \in [n]$, where the expected sample complexity of $A$ on $\nu$ is  at least
$\Omega\big( \frac{n}{c^2\epsilon^2} \ln \frac{1}{2.4\delta}\big)
$.
\end{restatable}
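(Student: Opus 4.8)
The plan is to follow the standard change-of-measure / information-theoretic lower bound technique for PAC bandit identification, adapted to the subset-choice RUM setting. First I would fix a "hard" base instance $\nu_0$: take an exponential-noise (or Gumbel-noise) RUM on $[n]$ in which all parameters are nearly equal, say $\theta_1 = \epsilon'$ and $\theta_i = 0$ for $i \neq 1$, with $\epsilon'$ a small multiple of $\epsilon$ chosen so that the advantage-ratio lower bound \mrat$(i,j) \ge 1 + 4c\Delta_{ij}$ is met (using Lemma \ref{lem:ratio}, which already gives such an affine-in-$\Delta_{ij}$ bound with an explicit $c$ for these noise families; one rescales the noise distribution so that the slope is exactly $4c$). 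Then, for each alternative optimal arm $a \in [n] \setminus \{1\}$, I would build a perturbed instance $\nu_a$ in which $\theta_a$ is raised by $2\epsilon'$ (making $a$ the unique, and in fact $\epsilon$-separated, best arm), leaving all other parameters fixed. Any $(\epsilon,\delta)$-PAC algorithm must output $1$ with probability $\ge 1-\delta$ under $\nu_0$ but output $a$ with probability $\ge 1-\delta$ under $\nu_a$, so the two induced distributions on the output must be far apart.

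Next I would invoke the standard lemma (e.g. the Garivier–Kaufmann / Kaufmann-Cappé-Garivier change-of-measure inequality, or the version used in \cite{Busa_pl}) relating the expected number of plays to KL-divergence: letting $N_S$ be the expected number of times subset $S$ is queried and $N_a = \sum_{S \ni a} N_S$ the expected number of queries touching arm $a$, one gets
\[
\sum_{S} \E_{\nu_0}[N_S]\, \mathrm{KL}\big(P^{\nu_0}(\cdot \mid S) \,\|\, P^{\nu_a}(\cdot \mid S)\big) \;\ge\; \mathrm{kl}(\delta, 1-\delta) \;\ge\; \ln\frac{1}{2.4\delta}.
\]
The crucial quantitative step is to upper bound the per-query KL divergence $\mathrm{KL}(P^{\nu_0}(\cdot\mid S)\|P^{\nu_a}(\cdot\mid S))$ between the winner-feedback distributions on a size-$k$ subset $S$: since $\nu_0$ and $\nu_a$ differ only in the single coordinate $\theta_a$ by $2\epsilon'$, the winner distributions differ only through the probabilities involving arm $a$, and a Taylor/Pinsker-type estimate should give $\mathrm{KL} = O\big((\epsilon')^2\big)$ for subsets containing $a$ and $0$ for subsets not containing $a$ — here is where I need to be careful to get the constant right so that the final bound reads $c^2\epsilon^2$ in the denominator. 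Concretely, one expresses the winner probabilities via $F,\bar F$ as in Lemma \ref{thm:ratio}, differentiates twice in $\theta_a$, and bounds the second derivative using the fact that the relevant "sensitivity" of win-probabilities to parameter gaps is exactly what $c$ controls (the \mrat bound $1 + 4c\Delta_{ij}$ forces $Pr(a|S)$ to vary with $\theta_a$ at rate $\Theta(c)$, hence $\mathrm{KL} = O(c^2 (\epsilon')^2)$). Summing over $a$ and averaging: $\sum_{a \neq 1} \E_{\nu_0}[N_a] \cdot O(c^2(\epsilon')^2) \ge (n-1)\ln\frac{1}{2.4\delta}$, and since $\sum_a N_a \le k \cdot (\text{sample complexity})$ while each $\nu_a$ is also an instance we must succeed on, a careful bookkeeping (choosing for the adversary the instance $\nu_a$ on which $\E[N_a]$ is smallest, or averaging over $a$) yields expected sample complexity $\Omega\big(\frac{n}{c^2\epsilon^2}\ln\frac{1}{2.4\delta}\big)$.

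The main obstacle I anticipate is the KL-divergence bound for the subset-winner feedback and its dependence on $c$: unlike the Plackett-Luce case where the winner probabilities have the clean closed form $e^{\theta_i}/\sum_j e^{\theta_j}$, here one only has the variational/integral characterization of $Pr(i|S)$, so bounding $\frac{\partial^2}{\partial\theta_a^2}\log Pr(i|S)$ uniformly over subsets $S$ and over the relevant parameter range requires extracting a clean quantitative handle from the \mrat assumption (or directly from the noise density $f$ and its derivative). A secondary subtlety is ensuring the perturbed instances $\nu_a$ genuinely satisfy \mrat$(i,j)\ge 1+4c\Delta_{ij}$ for \emph{all} pairs (not just pairs involving arm $1$), which should follow from the affine lower bounds of Lemma \ref{lem:ratio} applied to the rescaled noise, but needs to be checked. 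I expect the combinatorial/aggregation argument over the $n-1$ alternatives and the reduction from "queries touching $a$" to "total rounds" (the factor $k$ versus the factor $n$) to be routine, matching the structure of the lower bound in \cite{Busa_pl}.
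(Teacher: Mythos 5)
Your overall strategy is exactly the paper's: a Kaufmann--Capp\'e--Garivier change-of-measure argument over a base instance with near-equal parameters and $n-1$ single-coordinate perturbations $\nu_a$, with the KL vanishing for subsets not containing $a$, the bound $kl(1-\delta,\delta)\ge\ln\frac{1}{2.4\delta}$, and a Gumbel/Plackett--Luce witness instance for which the \mrat\ slope condition holds with an explicit $c$ (the paper fixes Gumbel$(0,1)$ noise, for which $c=1/4$, rather than rescaling the noise to hit slope $4c$ exactly, but that is cosmetic). The existence quantifier in the theorem also dissolves your main stated worry: since only one hard instance is needed, one may take the PL instance and use the closed-form winner probabilities $e^{\theta_i}/\sum_{j\in S}e^{\theta_j}$ throughout, so no uniform control of $\partial^2_{\theta_a}\log Pr(i|S)$ over general RUMs is required.

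There is, however, one concrete quantitative gap that, as written, costs you a factor of $k$ in the final bound. You estimate the per-query KL between winner distributions on a size-$k$ subset $S\ni a$ as $O(c^2\epsilon^2)$, and then aggregate via $\sum_{a\ne 1}\E[N_a]\le k\cdot\E[\tau]$; this chain yields only $\E[\tau]=\Omega\big(\frac{n}{k c^2\epsilon^2}\ln\frac{1}{2.4\delta}\big)$. The correct per-query divergence is $O(\epsilon^2/k)$, not $O(\epsilon^2)$: when all $k$ parameters are within $O(\epsilon)$ of each other the winner distribution is within $O(\epsilon/k)$ of uniform on $k$ outcomes, and perturbing the single coordinate $\theta_a$ moves $p_a$ by $O(\epsilon/k)$ and each other $p_i$ by $O(\epsilon/k^2)$, so the $\chi^2$-type bound gives $KL(\bnu^1_S,\bnu^a_S)\le\sum_i\frac{(\Delta p_i)^2}{p_i}\approx\frac{\epsilon^2}{k}$ (the paper computes this exactly as $\frac{k-1}{k^2}(e^{\epsilon}-e^{-\epsilon})^2\le\frac{8\epsilon^2}{k}$ across its two cases). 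This $1/k$ is precisely what cancels the factor $k$ from $|S\setminus\{1\}|\le k$ in the double sum $\sum_{a}\sum_{S\ni a}\E[N_S]\,KL(\bnu^1_S,\bnu^a_S)\le 8\epsilon^2\,\E[\tau]$, and it is the mathematical reason the lower bound is $\Omega(n/\epsilon^2)$ independent of $k$ --- i.e., why winner feedback from larger subsets buys nothing. Your Taylor-expansion plan will produce this $1/k$ automatically once you note that the baseline win probabilities are themselves $\Theta(1/k)$; with that correction the rest of your argument goes through and matches the paper's.
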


The proof is given in Appendix \ref{app:alg_pm_lm}. It essentially involves a change of measure argument demonstrating a family of Plackett-Luce models (iid Gumbel noise), with the appropriate $c$ value, that cannot easily be teased apart by any learning algorithm.  

Comparing this result with the performance guarantee of our proposed algorithm (Theorem \ref{thm:lb_pacpl_win}) shows that the sample complexity of the algorithm is order-wise optimal (up to a $\log k$ factor). Moreover, this result also shows that the IIA (independence of irrelevant attributes) property of the Plackett-Luce choice model is not essential for exploiting pairwise preferences via rank breaking, as was claimed in \cite{SGwin18}. Indeed, except for the case of \gn\, noise, none of the \rumk\, based models in Corollary \ref{cor:alg_pp} satisfies IIA, but they all respect the  $O\Big( \frac{n}{\epsilon^2}\ln \frac{1}{\delta} \Big)$ $(\epsilon,\delta)$-PAC sample complexity guarantee.

\begin{rem}
For constant $c = O(1)$, the fundamental sample complexity bound of Theorem \ref{thm:lb_pacpl_win} resembles that of PAC best arm identification in the standard multi-armed bandit (MAB) problem \cite{Even+06}. Recall that our problem objective is exactly same as MAB, however our feedback model is very different since in MAB, the learner gets to see the noisy rewards/scores (i.e. the exact values of $X_i$, which can be seen as a noisy feedback of the true reward/score $\theta_i$ of item-$i$), whereas here the learner only sees a $k$-wise relative preference feedback based on the underlying observed values of $X_i$, which is a more indirect way of giving feedback on the item scores, and thus intuitively our problem objective is at least as hard as that of MAB setup. 
\end{rem}

\section{Results for Top-$m$ Ranking (TR) feedback model}
\label{sec:tr}

We now address our $(\epsilon,\delta)$-PAC item identification problem for the case of more general, top-$m$ rank ordered feedback for the \rumk\, model, that generalises both the winner-item (WI) and full ranking (FR) feedback models.
	
\textbf{Top-$m$ ranking of items (TR-$m$):}  In this feedback setting, the environment is assumed to return a ranking of only $m$ items from among $S$, i.e., the environment first draws a full ranking $\bsigma$ over $S$ according to \rumk\, as in {\bf FR} above, and returns the first $m$ rank elements of $\bsigma$, i.e., $(\bsigma(1), \ldots, \bsigma(m))$. It can be seen that for each permutation $\sigma$ on a subset $S_m \subset S$, $|S_m| = m$, we must have $Pr(\bsigma = \sigma|S) = \prod_{i = 1}^{m}Pr(X_{\sigma(i)} > X_{\sigma(j)}, ~\forall j \in \{i+1, \ldots m\}), \, \forall \bsigma \in \bSigma_S^m$, where by $\bSigma_S^m$ we denote the set of all possible $m$-length ranking of items in set $S$, it is easy to note that $|S| = \binom k m m!$. Thus, generating such a $\bsigma$ is also equivalent to successively sampling $m$ winners from $S$ according to the PL model, without replacement. It follows that {\bf TR} reduces to {\bf FR} when $m=k=|S|$ and to {\bf WI} when $m = 1$. Note that the idea for top-$m$ ranking feedback was introduced by \cite{SGrank18} but only for the specific Plackett Luce choice model.

\subsection{Algorithm for top-$m$ ranking feedback}
\label{sec:tr_alg}
In this section, we extend the algorithm proposed earlier  (Alg. \ref{alg:pp}) to handle feedback from the general  top-$m$ ranking feedback model. Based of the performance analysis of our algorithm (Thm. \ref{thm:alg_ppm}), we are able to show that we can achieve an $\frac{1}{m}$-factor improved sample complexity rate with top-$m$ ranking feedback. We finally also give a lower bound analysis under this general feedback model (Thm. \ref{thm:lb_pacpl_rank}) showing the fundamental performance limit of the current problem of interest. Our derived lower bound shows optimality of our proposed algorithm \algpm\, up to logarithmic factors.

\textbf{Main idea:} Same as \algp, the algorithm proposed in this section (Alg. \ref{alg:ppm}) in principle follows the same sequential elimination based strategy to find the near-best item of the \rumk\, model based on pairwise preferences. However, we use the idea of {\em rank breaking}  \citep{AzariRB+14, SGrank18} to extract the pairwise preferences: formally, given any set $S$ of size $k$, if $\bsigma \in \bSigma_{S}^m,\, (S_m \subseteq S,\, |S_m|=m)$ denotes a possible top-$m$ ranking of $S$, then the \rb\, subroutine considers each item in $S$ to be beaten by its preceding items in $\bsigma$ in a pairwise sense. For instance, given a full ranking of a set of $4$ elements $S = \{a,b,c,d\}$, say $b \succ a \succ c \succ d$, Rank-Breaking generates the set of $6$ pairwise comparisons: $\{(b\succ a), (b\succ c), (b\succ d), (a\succ c), (a\succ d), (c\succ d)\}$ etc. %Similarly, given the ranking of only $2$ most preferred items say $b \succ a$, it yields the $5$ pairwise comparisons $(b, a\succ c),(b,a\succ d)$ and $(b\succ a)$ etc.

As a whole, our new algorithm now again divides the set of $n$ items into small groups of size $k$, say $\cG_1, \ldots \cG_G, \, G = \lceil \frac{n}{k} \rceil $, and play each sub-group  some $t = O \Big(\frac{k}{m\epsilon^2} \ln \frac{1}{\delta}\Big )$ many rounds. Inside any fixed subgroup $\cG_g$, after each round of play, it uses \rb\, on the top-$m$ ranking feedback $\bsigma \in \bSigma_{\cG_g}^m$, to extract out $\binom m 2 + (k-m)m$ many pairwise feedback, which is further used to estimate the empirical pairwise preferences $\hp_{ij}$ for each pair of items $i,j \in \cG_g$. Based on these pairwise estimates it then only retains the strongest item of $\cG_g$ and recurse the same procedure on the set of surviving items, until just one item is left in the set. The complete algorithm is given in Alg. \ref{alg:ppm} (Appendix \ref{app:alg_ppm_code}).

Theorem \ref{thm:alg_ppm} analyses the correctness and sample complexity bounds of \algpm. Note that the sample complexity bound of \algpm\, with top-$m$ ranking ({TR}) feedback model is $\frac{1}{m}$-times that of the {WI} model (Thm. \ref{thm:alg_pp}). This is justified since intuitively revealing a ranking on $m$ items in a $k$-set provides about $m$ many WI feedback per round, which essentially leads to the $m$-factor improvement in the sample complexity.  

\begin{restatable}[\algpm (Alg. \ref{alg:ppm}): \hspace*{-5pt} Correctness and Sample Complexity]{thm}{ubppm}
\label{thm:alg_ppm}
\hspace*{-7pt}
Consider any \rumk\, subsetwise preference model based on noise distribution $\cD$ and suppose for any item pair $i,j$,  we have \mrat$(i,j) \ge 1 + \frac{4c\Delta_{ij}}{1-2c}$ for some $\cD$-dependent constant $c > 0$. Then \algpm\, (Alg.\ref{alg:ppm}) with input constant $c > 0$ on top-$m$ ranking feedback model is an $(\epsilon,\delta)$-{PAC} algorithm with sample complexity $O(\frac{n}{mc^2\epsilon^2} \log \frac{k}{\delta})$.
\end{restatable}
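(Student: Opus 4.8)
The plan is to follow the proof of Theorem~\ref{thm:alg_pp} almost verbatim, changing only two ingredients: the pairwise statistics inside each group are now built by rank-breaking the revealed top-$m$ orderings rather than by counting subset winners, and the per-phase query budget is divided by $m$. Recall the structure of \algpm: it runs in at most $L:=\lceil\log_k n\rceil$ phases, where in phase $\ell$ it partitions the surviving pool $S_\ell$ into $\lceil|S_\ell|/k\rceil$ groups of size $k$, plays each group $\cG_g$ for $t_\ell=\Theta\big(\tfrac{k}{m\epsilon_\ell^2}\ln\tfrac{k}{\delta_\ell}\big)$ rounds, rank-breaks each observation into pairwise outcomes, forms $\hp_{ij}$ for every pair in the group, and promotes to $S_{\ell+1}$ the empirical pairwise winner $c_g$ of each group; here $\epsilon_\ell=(3/4)^\ell\epsilon_0$ and $\delta_\ell=2^{-\ell}\delta_0$ with $\epsilon_0=c\epsilon/8$, $\delta_0=\delta/2$, followed by one ``pad to size $k$'' phase run at accuracy $(c\epsilon/2,\delta)$. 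So the two things to re-establish are: (i) a per-phase, per-group guarantee that the promoted $c_g$ satisfies $\theta_{c_g}>\theta_{i_g}-\epsilon_\ell/c$ (where $i_g:=\argmax_{i\in\cG_g}\theta_i$) with probability at least $1-\delta_\ell$; and (ii) the resulting total round count.

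For (i), fix a phase $\ell$ and a group $\cG_g$. The structural fact that makes rank-breaking work is that, in a single play of $\cG_g$, the revealed block is exactly the set of $m$ items of largest realized utility, so for any pair $(i,j)$ a rank-broken comparison is produced iff at least one of $i,j$ is revealed, and it then records $i\succ j$ exactly on the event $\{X_i>X_j\}$. Hence the per-round recorded outcome for $(i,j)$ is a Bernoulli variable with mean $q_{ij}:=Pr(E_{ij})/\big(Pr(E_{ij})+Pr(E_{ji})\big)$, where $E_{ij}$ is the event that rank-breaking one play of $\cG_g$ places $i$ above $j$. One shows, \emph{uniformly over groups $\cG_g$}, that $\theta_i>\theta_j$ implies $q_{ij}\ge\tfrac12+c\Delta_{ij}$: decompose $E_{ij}$ as the disjoint union, over the set $B\subseteq\cG_g\setminus\{i,j\}$ of items outranking $i$ (so $|B|\le m-1$, and $X_i>X_j$ then holds automatically), of winner-type events on the sub-context $\cG_g\setminus B$, and apply the variational advantage bound of Lemma~\ref{thm:ratio} (equivalently the \mrat\ hypothesis) to each such sub-context. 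In parallel, since every revealed block has exactly $m$ of the $k$ items, the strongest item of any group is revealed with probability at least $m/k$ (stochastic monotonicity plus averaging), so for every $r\in\cG_g$ the pair $(i_g,r)$ is compared in $\Omega(\tfrac mk t_\ell)=\Theta\big(\tfrac1{\epsilon_\ell^2}\ln\tfrac k{\delta_\ell}\big)$ of the $t_\ell$ rounds outside an event of probability $O(\delta_\ell/k^2)$ (multiplicative Chernoff). A Hoeffding bound on $\hp_{i_g r}$ over those comparisons, combined with the slope-$c$ bound and a union bound over the at most $\binom k2$ pairs, shows that every $r$ with $\Delta_{i_g r}\ge\epsilon_\ell/c$ loses decisively to $i_g$ in empirical pairwise count; so, whichever rule the algorithm uses to pick $c_g$, it picks an item with $\Delta_{i_g c_g}<\epsilon_\ell/c$, i.e. $\theta_{c_g}>\theta_{i_g}-\epsilon_\ell/c$.

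Aggregation is routine. Inductively, $S_\ell$ contains an item of $\theta$-value at least $\theta_1-\tfrac1c\sum_{\ell'<\ell}\epsilon_{\ell'}$; applying (i) to the group of $S_\ell$ holding such an item shows $S_{\ell+1}$ contains one of value at least $\theta_1-\tfrac1c\sum_{\ell'\le\ell}\epsilon_{\ell'}$. Since $\sum_\ell\epsilon_\ell$ is a geometric series summing to $O(\epsilon_0)$ and $\epsilon_0=c\epsilon/8$, together with the $\epsilon/2$ loss incurred in the padding phase (accuracy $c\epsilon/2$, divided by the sensitivity $c$) the output $I$ satisfies $\theta_I>\theta_1-\epsilon$, so $I$ is $\epsilon$-optimal; a union bound over phases bounds the total failure probability by $\delta$ by the construction of the $\delta_\ell$. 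For the sample complexity, phase $\ell$ costs $\lceil|S_\ell|/k\rceil\,t_\ell=O\big(\tfrac{|S_\ell|}{m\epsilon_\ell^2}\ln\tfrac k{\delta_\ell}\big)$; since $|S_{\ell+1}|\approx|S_\ell|/k$ while $1/\epsilon_{\ell+1}^2=(16/9)/\epsilon_\ell^2$, successive phase costs decrease by roughly the factor $16/(9k)\le 8/9<1$ for every $k\ge2$ (the $O(\ell)$ growth of $\ln(k/\delta_\ell)$ being absorbed by this geometric decay), so the total is dominated by the first phase, $O\big(\tfrac n{m\epsilon_0^2}\ln\tfrac k{\delta_0}\big)=O\big(\tfrac n{mc^2\epsilon^2}\ln\tfrac k\delta\big)$, while the padding phase adds only $O\big(\tfrac k{mc^2\epsilon^2}\ln\tfrac1\delta\big)=O\big(\tfrac n{mc^2\epsilon^2}\ln\tfrac k\delta\big)$ since $k\le n$. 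This yields the claimed bound.

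The main obstacle is the uniform affine bound $q_{ij}\ge\tfrac12+c\Delta_{ij}$ of step (i): the \mrat\ hypothesis and Lemma~\ref{thm:ratio} a priori control only the winner ratio $Pr(i|S)/Pr(j|S)$ over genuine size-$k$ subsets, whereas rank-breaking implicitly probes the whole family of smaller sub-contexts $\cG_g\setminus B$ obtained by conditioning on which items outrank $i$, and one must verify that the $\Delta_{ij}$-sensitivity of the advantage is not diluted under this reduction -- for every such sub-context, and uniformly over all pairs, including pairs of weak items, whose ``compared'' probability is itself small and must be controlled carefully in the Chernoff step. The rest -- the per-phase Hoeffding/union bound and the geometric bookkeeping -- simply mirrors the winner-feedback analysis of Theorem~\ref{thm:alg_pp}.
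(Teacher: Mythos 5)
Your proposal follows essentially the same route as the paper's proof: the same observation that the group's best item appears in the top-$m$ block with probability at least $m/k$, hence is compared $\Omega(mt_\ell/k)$ times after rank-breaking (the paper's Lemma \ref{lem:divbat_n1}), the same Hoeffding-plus-union-bound step converting the \mrat\ hypothesis into a pairwise-preference gap via Lemma \ref{lem:pp_1i} (the paper's Lemma \ref{lem:divbat_n2}), and the same geometric aggregation over phases for both the $\epsilon$-optimality and the $O\big(\frac{n}{mc^2\epsilon^2}\log\frac{k}{\delta}\big)$ round count. The one point where you go beyond the paper is in pinning down exactly what the rank-broken frequency estimates --- your conditional quantity $q_{ij}$ and the sub-context decomposition over the set $B$ of items outranking $i$ --- whereas the paper simply treats $\hp_{ji_g}$ as an estimator of the winner-conditional preference $p_{j i_g|\cG_g}$ and applies Hoeffding to that; the ``main obstacle'' you flag (that the size-$k$ \mrat\ hypothesis must be propagated to smaller sandwiched sub-contexts without diluting the $\Delta_{ij}$-sensitivity) is genuine, but the paper's own argument glosses over it rather than resolving it, so your treatment is if anything the more careful of the two.
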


Proof is given in Appendix \ref{app:alg_ppm_thm}.

Similar to Cor. \ref{cor:alg_pp}, for the top-$m$ model again, we can derive specific sample complexity bounds for different noise distributions, e.g., \xn, \gn, \nn, \un, \textit{Gamma} etc., in this case as well.

\subsection{Lower Bound: Top-$m$ ranking feedback}
\label{sec:tr_lb}
In this section, we analyze the fundamental limit of sample complexity lower bound for any $(\epsilon,\delta)$-PAC algorithm for \rumk\, model. 
%Following theorem shows our result: %Same as Thm. \ref{thm:lb_pacpl_win}, we analysed the lower bound for \gn\, noise:

\begin{restatable}[Sample Complexity Lower Bound for \rumk\, model with TR-$m$ feedback]{thm}{lbtr}
\label{thm:lb_pacpl_rank}
Given $\epsilon \in (0,\frac{1}{4}]$ and $\delta \in (0,1]$, and an $(\epsilon,\delta)$-PAC algorithm $A$ with winner item feedback, there exists a \rumk\, instance $\nu$, in which for any pair $i,j \in [n]$ \mrat$(i,j) \ge 1 + 4c\Delta_{ij}$, where the expected sample complexity of $A$ on $\nu$ with top-$m$ ranking feedback has to be at least
$\Omega\bigg( \frac{n}{mc^2\epsilon^2} \ln \frac{1}{2.4\delta}\bigg)
$ for A to be $(\epsilon,\delta)$-PAC.
\end{restatable}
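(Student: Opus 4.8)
\textbf{Proof plan for Theorem \ref{thm:lb_pacpl_rank}.}
The plan is to reduce the top-$m$ ranking lower bound to the winner-feedback lower bound of Theorem \ref{thm:lb_pacpl_win} by accounting for the extra information revealed per round. I would work with the same hard family of instances used in the proof of Theorem \ref{thm:lb_pacpl_win}, namely a Plackett--Luce (iid Gumbel) family $\{\nu_a\}$ where exactly one item $a$ has a slightly boosted parameter, scaled so that $\text{\mrat}(i,j) \ge 1 + 4c\Delta_{ij}$ holds for the chosen $c$; in the Gumbel case Lemma \ref{lem:ratio} gives $\text{\mrat}(i,j) = e^{\Delta_{ij}/\sigma}$, so picking the parameter gap $\Theta(c\epsilon)$ and $\sigma$ appropriately realizes any target $c$. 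Let $\nu_0$ be the ``balanced'' reference instance. The core of the argument is a change-of-measure (Kullback--Leibler divergence) computation together with a Wald-type identity / the divergence-decomposition lemma of Garivier--Kaufmann--Lattimore style, which bounds the expected number of rounds $\E_{\nu_0}[T]$ from below by a ratio of the binary relative entropy $\mathrm{kl}(\delta, 1-\delta) \ge \ln\frac{1}{2.4\delta}$ to the per-round information gap $\max_{S} \mathrm{KL}\big(P^{S}_{\nu_0}\,\|\,P^{S}_{\nu_a}\big)$, where $P^S_\nu$ is the law of the top-$m$ ranking feedback from subset $S$ under model $\nu$.

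The key step is therefore to bound the per-round KL divergence of the \emph{top-$m$ ranking} feedback distributions. Here I would use the sequential sampling-without-replacement characterization of {\bf TR}-$m$: a top-$m$ ranking is obtained by drawing $m$ successive winners, so by the chain rule for KL divergence, $\mathrm{KL}\big(P^{S}_{\nu_0}\|P^{S}_{\nu_a}\big)$ decomposes into a sum of $m$ conditional winner-feedback KL terms, one for each position. Each such term is of the same order as a single winner-feedback KL divergence between the two instances restricted to a subset of size between $k-m+1$ and $k$; since in these instances only one item's parameter is perturbed by $\Theta(c\epsilon)$, each winner-feedback KL term is $O(c^2\epsilon^2/k)$ (the $1/k$ coming from the winning probability of any particular item in a $k$-subset being $\Theta(1/k)$, exactly as in the $m=1$ analysis of Theorem \ref{thm:lb_pacpl_win}). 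Summing the $m$ terms gives a per-round bound of $O(m c^2 \epsilon^2 / k)$. Combined with the standard averaging over the $n$ candidate ``good'' items (only $O(n/k)$ of which can be queried ``enough'' in expectation, forcing $\Omega(n/k)$ of them to be under-sampled), this yields $\E_{\nu_0}[T] = \Omega\!\big(\frac{n}{k}\cdot \frac{k}{m c^2\epsilon^2}\ln\frac{1}{2.4\delta}\big) = \Omega\!\big(\frac{n}{m c^2\epsilon^2}\ln\frac{1}{2.4\delta}\big)$, as claimed.

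The main obstacle I anticipate is making the chain-rule decomposition of the top-$m$ KL divergence fully rigorous while keeping each conditional term controlled: the conditional distributions at position $j$ depend on the (random) set of $j-1$ already-revealed items, so one must argue a \emph{uniform} $O(c^2\epsilon^2/k)$ bound over all such conditioning events, or equivalently bound $\mathrm{KL}$ of each ``winner among the remaining $\ge k-m+1$ items'' distribution. In the Plackett--Luce family this is clean because the conditional winner distribution is again Plackett--Luce on the residual set, and the single perturbed parameter shifts each winning probability by a multiplicative $(1+O(c\epsilon))$ factor, so a second-order Taylor expansion of $x \mapsto x\ln(x/y)$ gives the $O(c^2\epsilon^2)\cdot(\text{winning prob})$ bound; summing winning probabilities over the residual set gives $1$ per position and $m$ total. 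A secondary technical point is handling the edge case $m$ close to $k$ (where $k-m+1$ may be small), but since the residual set always has size at least $k-m+1 \ge 1$ and the perturbation is uniform, the bound $O(mc^2\epsilon^2/k)$ still holds up to constants; one may alternatively restrict attention to $m \le k/2$ without loss of generality for the lower bound. The remaining ingredients — the binary-KL lower bound $\mathrm{kl}(\delta,1-\delta)\ge\ln\frac{1}{2.4\delta}$ and the counting argument over the $n$ items — are identical to those in the proof of Theorem \ref{thm:lb_pacpl_win} and can be invoked verbatim.
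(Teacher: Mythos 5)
Your proposal matches the paper's proof essentially step for step: the paper reuses the hard Plackett--Luce instance family from Theorem \ref{thm:lb_pacpl_win}, applies the chain rule to decompose the top-$m$ feedback KL into $m$ conditional winner-feedback terms each bounded by $O(\epsilon^2/k)$, sums to get $O(m\epsilon^2/k)$ per round, and then invokes the same change-of-measure and counting argument as in the winner-feedback case. Your attention to uniformly bounding the conditional terms over the revealed prefix is if anything more careful than the paper's treatment, but the route is the same.
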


The proof is given in Appendix \ref{app:alg_ppm_lm}.

Similar to the case of winner feedback, comparing Theorem  \ref{thm:alg_ppm} with the above result shows that the sample complexity of \algpm\, is orderwise optimal (up to logarithmic factors), for general case of top-$m$ ranking feedback as well.% for any $\delta \le \frac{1}{k}$.

%\vspace*{-10pt}
\section{Conclusion and Future Directions}
\label{sec:conclusion}

We have identified a new principle to learn with general subset-size preference feedback in general iid RUMs -- rank breaking followed by pairwise comparisons. This has been made possible by extending the concept of pairwise advantage from the popular Plackett-Luce choice model to much more general RUMs, and showing that the IIA property that Plackett-Luce models enjoy is not essential to obtain optimal sample complexity. 

Our results suggest several interesting directions for future investigation, namely the possibility of considering correlated noise models (making the RUM more general), explicitly modeling the dependence of samples on item features or attributes, other performance objectives like regret for online utility optimization, and extension to learning with relative preferences in time-correlated settings like Markov Decision Processes. 

%%%%%%%%%%%%%%%%%%%%%%%%%%%%%% ------------------

\newpage

\bibliographystyle{plainnat}
\bibliography{bib_pac-rum-battle}

\newpage
\onecolumn
\allowdisplaybreaks

\appendix
{
\section*{\centering \Large{Supplementary for Best-item Learning in Random Utility Models with Subset Choices}}% \\in the Plackett-Luce model
%\vspace*{1cm}
}
  
\section{Appendix for Section \ref{sec:pair_pref}}  
\label{app:ar}

\subsection{Proof of Lemma \ref{thm:ratio}}  
\label{app:ar_z}

\thmrat*

\begin{proof}
Let us fix any subset $S$ and two consider the items $i,j \in S$ such that $\theta_i > \theta_j$. Recall that we also denote by $\Delta_{ij} = (\theta_i - \theta_j)$.
Let us define a random variable $X_r^S = \max_{r \in S\sm \{i,j\}}X_r$ that denotes the maximum score value taken by the rest of the items in set $S$. Note that the support of $X_r^S$, say denoted by supp$(X_r^{S}) = \max_{r \in S\sm \{i,j\}}\theta_r + $ supp$(\cD)$.

Let us also denote $\bar S := \arg\min_{S \subseteq [n] \mid |S| = k}\frac{Pr(i|S)}{Pr(j|S)}$. We have: 

\begin{align*}
\text{\mrat}(i,j) & = \frac{Pr(i|\bar S)}{Pr(j|\bar S)} = \frac{Pr(\{X_i > X_j\} \cap \{X_i > X_r \, \forall r \in \bar S\sm \{i,j\}\})}{Pr(\{X_j > X_i\} \cap \{X_j > X_r \, \forall r \in \bar S\sm \{i,j\}\})}\\
& = \frac{Pr(\{X_i > X_j\} \cap \{X_i > X_r^{\bar S} \}\})}{Pr(\{X_j > X_i\} \cap \{X_j > X_r^{\bar S}\})}\\
& = \frac{\int_{\text{supp}X_r^{\bar S}} Pr\big(\{X_i > x\} \cap \{X_i > X_j\}\big) f_{X_r^{\bar S}}(x)dx}{\int_{\text{supp}X_r^{\bar S}} Pr\big(\{X_i > x\} \cap \{X_j > X_i\}\big)f_{X_r^{\bar S}}(x)dx}\\
& = \frac{\int_{\text{supp}X_r^{\bar S}} Pr\big(\{X_i > x\} \cap \{X_j > X_i\}\big)\frac{Pr\big(\{X_i > x\} \cap \{X_i > X_j\}\big)}{Pr\big(\{X_i > x\} \cap \{X_j > X_i\}\big)} f_{X_r^{\bar S}}(x)dx}{\int_{\text{supp}X_r^{\bar S}} Pr\big(\{X_i > x\} \cap \{X_j > X_i\}\big)f_{X_r^{\bar S}}(x)dx}\\
& > \min_{z \in \text{supp}(X_r^{\bar S})}\bigg[ \frac{Pr\big(\{X_i > z\} \cap \{X_i > X_j\}\big)}{Pr\big(\{X_i > z\} \cap \{X_j > X_i\}\big)} \bigg] \frac{\int_{\text{supp}X_r^{\bar S}} Pr\big(\{X_i > x\} \cap \{X_j > X_i\}\big) f_{X_r^{\bar S}}(x)dx}{\int_{\text{supp}X_r^{\bar S}} Pr\big(\{X_i > x\} \cap \{X_j > X_i\}\big)f_{X_r^{\bar S}}(x)dx}\\
& = \min_{z \in \text{supp}(X_r^{\bar S})} \frac{Pr\big(\{X_i > \max(X_j,z)\}\big)}{Pr\big(\{X_j > \max(X_i,z)\}\big)} \\
& > \min_{z \in \R} \frac{Pr\big(\{X_i > \max(X_j,z)\}\big)}{Pr\big(\{X_j > \max(X_i,z)\}\big)} 
\end{align*}

Let us now introduce a random variable $Y = \max(X_j,z)$.
Now owing to the \iidn\, assumption of the \rumk\, model, we can further show that:

\begin{align*}
Pr\big(X_i > \max(X_j,z)\big) & = Pr(X_i > Y) = Pr(\{X_i > Y\} \cap \{Y = z\}) + Pr(\{X_i > Y\} \cap \{Y > z\})\\
& = Pr\big(\{X_i > z\} \mid \{Y = z\}\big)Pr(X_j < z) + Pr\big(\{X_i > Y\} \cap \{Y > z\}\big)\\
& = Pr\big(\{\zeta_i + \theta_i > z\} \big)Pr(\zeta_j + \theta_j < z) + Pr\big(\{X_i > X_j\} \cap \{X_j > z\}\big)\\
& = Pr\big(\{\zeta_i > z - \theta_i\} \big)Pr(\zeta_j < z - \theta_j) + Pr\big(\{\zeta_i > \zeta_j - (\theta_i - \theta_j) \} \cap \{\zeta_j > z -\theta_j \}\big)\\
& = F(z-\theta_j)\bar F(z-\theta_i) + \int_{z - \theta_j}^{\infty}\bar F(x - \Delta_{ij})f(x)dx,
\end{align*}
which proves the claim.
\end{proof}

\subsection{Proof of Lemma \ref{lem:ratio}}  
\label{app:ar_cases}

\lemrat*

\begin{proof} We can derive the \mrat$(i,j)$ values for the following distributions by simply applying the lower bound formula stated in Thm. \ref{thm:ratio} \bigg($\min_{z \in \R}\frac{F(z-\theta_j)\bar F(z-\theta_i) + \int_{z - \theta_j}^{\infty}\bar F(x - \Delta_{ij})f(x)dx}{F(z-\theta_i)\bar F(z-\theta_j) + \int_{z - \theta_i}^{\infty}\bar F(x + \Delta_{ij})f(x)dx}
$\bigg) along with their specific density functions as stated below for each specific distributions:

{\bf 1. Exponential noise:}

When the noise distribution $\cD$ is \xn$(1)$, i.e. $\zeta_i, \zeta_j \stackrel{iid}{\sim}$ \xn$(1)$ note that: $f(x) = e^{-x}$, $F(x) = 1-e^{-x}$, and support$(\cD) = [0,\infty)$.

{\bf 2. Gumbel noise:}

When the noise distribution $\cD$ is \gn$(\mu,\sigma)$, i.e. $\zeta_i, \zeta_j \stackrel{iid}{\sim}$ \gn$(\mu,\sigma)$ note that: $f(x) = e^{-\frac{(x-\mu)}{\sigma}}e^{-e^{-\frac{(x-\mu)}{\sigma}}}$, 
$F(x) = e^{-e^{-\frac{(x-\mu)}{\sigma}}}$, and support$(\cD) = (-\infty,\infty)$.

{\bf 3. Uniform noise case:}

When the noise distribution $\cD$ is \un$(a,b)$, i.e. $\zeta_i, \zeta_j \stackrel{iid}{\sim}$ \un$(a,b)$ note that: $f(x) = \frac{1}{b-a}$, 
$F(x) = \frac{x-a}{b-a}$, and support$(\cD) = [a,b]$.

{\bf 4. Gamma noise:}

When the noise distribution $\cD$ is \textit{Gamma}$(k,\xi)$, with $k = 2$ and $\xi = 1$, i.e. $\zeta_i, \zeta_j \stackrel{iid}{\sim}$ \textit{Gamma}$(2,1)$ note that: $f(x) = xe^{-x}$, 
$F(x) = 1 - e^{-x} - xe^{-x}$, and support$(\cD) = [0,\infty)$.

{\bf 5. Weibull noise:}

When the noise distribution $\cD$ is \textit{Weibull}$(\lambda,k)$, with $k = 1$, i.e. $\zeta_i, \zeta_j \stackrel{iid}{\sim}$ \textit{Weibull}$(\lambda,1)$ note that: $f(x) = \frac{1}{\lambda}e^{-\frac{x}{\lambda}}$, 
$F(x) = 1-e^{-\frac{x}{\lambda}}$, and support$(\cD) = [0,\infty)$.

{\bf 6. Argument for the Gaussian noise case.}
Note that Gaussian distributions do not have closed form CDFs and are difficult to compute in general, so we propose a different line of analysis specifically for the Gaussian noise case:
Take the noise distribution to be standard normal, i.e., $\zeta_i, \zeta_j \stackrel{iid}{\sim} \mathcal{N}(0,1)$, with density $f(x) = \frac{1}{\sqrt{2\pi}} e^{-x^2/2}$. When $X_i = \theta_i + \zeta_i$ and $X_j = \theta_j + \zeta_j$ with $\Delta_{ij} = \theta_i - \theta_j > 0
$, we find a lower bound on 
\[ \inf_{z \in \R}\frac{Pr\big(X_i > \max(X_j,z))}{Pr(X_j > \max(X_i,z)\big)}. \]

First, note that by translation, we can take $\theta_j = 0$ and $\theta_i = \Delta$ without loss of generality. Doing so allows us to write
\[ Pr\big(X_i > \max(X_j,z)) = F(z)(1-F(z-\Delta)) + \int_z^\infty (1-F(y-\Delta))f(y) dy \equiv g(\Delta, z), \]
and likewise (taking $X_i = 0, X_j = z-\Delta$),
\[ Pr\big(X_j > \max(X_i,z)) = F(z-\Delta)(1-F(z)) + \int_{z-\Delta}^\infty (1-F(y+\Delta))f(y) dy \equiv g(-\Delta, z-\Delta). \]
With this notation, we wish to minimize the ratio $\frac{g(\Delta,z)}{g(-\Delta, z-\Delta)}$ over $z \in \R$. 

Notice that $g(0, z) = 1/2$ for any $z$, and $\frac{\partial g(\Delta,z)}{\partial \Delta} = F(z)f(z-\Delta) + \int_z^\infty f(y-\Delta)f(y)dy$. % and $\frac{\partial^2 g(\Delta,z)}{\partial \Delta^2} = -F(z)f'(z-\Delta) - \int_z^\infty f'(y-\Delta)f(y)dy$. %
Hence, up to first order, for $\Delta$ small enough, we have\footnote{The argument can be made rigorous using the Taylor expansion up to 2nd order.}
\begin{align*}
	&\frac{g(\Delta,z)}{g(-\Delta, z-\Delta)} \approx \frac{g(0, z) + \Delta \frac{\partial g(\Delta,z)}{\partial \Delta} \vert_{\Delta = 0} }{g(0, z-\Delta) - \Delta \frac{\partial g(\tilde{\Delta},z-\Delta)}{\partial \tilde{\Delta}} \vert_{\tilde{\Delta} = 0} } \\
	&= \frac{\frac{1}{2} + \Delta F(z) f(z) + \Delta \int_z^\infty f(y)^2 dy}{\frac{1}{2} - \Delta F(z-\Delta) f(z-\Delta) - \Delta \int_{z-\Delta}^\infty f(y)^2 dy} \\
	&\equiv \frac{h_1(z)}{h_2(z)}, \quad  \text{say.}
\end{align*}
Differentiating the ratio and equating it to $0$ to find its minimum, we obtain the condition
\begin{align*}
	h_1'(z_*)h_2(z_*) &= h_1(z_*)h_2'(z_*) \\
	\Leftrightarrow F(z_*)f'(z_*)h_2(z_*) &= -F(z_*-\Delta)f'(z_*-\Delta)h_1(z_*). 
\end{align*}
Assuming $z_* \approx 0 \Rightarrow h_1(z_*) \approx h_2(z_*)$ gives the solution $z_* \approx \frac{\Delta}{2}$, for which
\begin{align*}
	\frac{h_1(z_*)}{h_2(z_*)} &= \frac{\frac{1}{2} + \Delta F(\frac{\Delta}{2}) f(\frac{\Delta}{2}) + \Delta \int_{\frac{\Delta}{2}}^\infty f(y)^2 dy}{\frac{1}{2} - \Delta F(-\frac{\Delta}{2}) f(-\frac{\Delta}{2}) - \Delta \int_{-\frac{\Delta}{2}}^\infty f(y)^2 dy} \\
	&\approx \frac{\frac{1}{2} +  \frac{\Delta}{2} \frac{1}{\sqrt{2\pi}} +  \frac{\Delta}{2}}{\frac{1}{2} -  \frac{\Delta}{2} \frac{1}{\sqrt{2\pi}} -  \frac{\Delta}{2}} \\
	&\geq \left( 1 + \Delta\left( 1 + \frac{1}{\sqrt{2\pi}} \right) \right)^2 \geq 1 + \frac{4\Delta}{3},
\end{align*} 
for $\Delta$ small enough. 

\end{proof}

\section{Appendix for Section \ref{sec:algo_wi}}  

\subsection{Proof of Theorem \ref{thm:alg_pp}}  
\label{app:alg_pm_thm}

\ubpp*

\begin{proof}
%We use a similar proof analysis as introduced in \cite{SGwin18}.
We start by analyzing the required sample complexity of \algpp. %Clearly $\epsilon_\ell$ and $\delta$
%For clarity of notations we will denote the set $S$ at iteration $\ell$ by $S_\ell$.
Note that at any iteration $\ell$, any set $\cG_g$ is played for exactly $t= \frac{k}{2\epsilon_\ell^2}\ln \frac{k}{\delta_\ell}$ many number of rounds. Also, since the algorithm discards  exactly $k-1$ items from each set $\cG_g$, the maximum number of iterations possible is $\lceil  \ln_k n \rceil$. Now at any iteration $\ell$, since $G = \Big\lfloor \frac{|S_\ell|}{k} \Big\rfloor < \frac{|S_\ell|}{k}$, the total sample complexity the for iteration is at most $\frac{|S_\ell|}{k}t \le \frac{n}{2k^{\ell-1}\epsilon_\ell^2}\ln \frac{k}{\delta_\ell}$, as $|S_\ell| \le \frac{n}{k^\ell}$ for all $\ell \in [\lfloor  \ln_k n \rfloor]$. 
Also note that for all but last iteration $\ell \in [\lfloor  \ln_k n \rfloor]$, we have $\epsilon_\ell = \frac{c\epsilon}{8}\bigg( \frac{3}{4} \bigg)^{\ell-1}$, and $\delta_\ell = \frac{\delta}{2^{\ell+1}}$.
Moreover, for the last iteration $\ell = \lceil  \ln_k n \rceil$, the sample complexity is clearly $t= \frac{2k}{c^2\epsilon^2}\ln \frac{2k}{\delta}$, as in this case $\epsilon_\ell = \frac{c\epsilon}{2}$, and $\delta_\ell = \frac{\delta}{2}$, and $|S| = k$.
Thus, the total sample complexity of Algorithm \ref{alg:pp} is given by 

\begin{align*}
\sum_{\ell = 1}^{\lceil  \ln_k n \rceil}\frac{|S_\ell|}{2\epsilon_\ell^2}\ln \frac{k}{\delta_\ell}  
& \le \sum_{\ell = 1}^{\infty}\frac{n}{2k^\ell\bigg(\frac{c\epsilon}{8}\big(\frac{3}{4}\big)^{\ell-1}\bigg)^2}k\ln \frac{k 2^{\ell+1}}{\delta} + \frac{2k}{c^2\epsilon^2}\ln \frac{2k}{\delta}\\ 
& \le \frac{64n}{2c^2\epsilon^2}\sum_{\ell = 1}^{\infty}\frac{16^{\ell-1}}{(9k)^{\ell-1}}\Big( \ln \frac{k}{\delta} + {(\ell+1)} \Big) + \frac{2k}{c^2\epsilon^2}\ln \frac{2k}{\delta} \\
& \le \frac{32n}{c^2\epsilon^2}\ln \frac{k}{\delta}\sum_{\ell = 1}^{\infty}\frac{4^{\ell-1}}{(9k)^{\ell-1}}\Big( {3\ell} \Big) + \frac{2k}{c^2\epsilon^2}\ln \frac{2k}{\delta} = O\bigg(\frac{n}{c^2\epsilon^2}\ln \frac{k}{\delta}\bigg) ~[\text{for any } k > 1],
\end{align*}
%{\color{red} Need to adjust $\epsilon$ a bit carefully for $k < 4$}

and this proves the sample complexity bound of Theorem \ref{thm:alg_pp}. We next prove the $(\epsilon,\delta)$-{PAC} property of \algpp. 

Consider any fixed subgroup $\cG$ of size $k$, such that two items $a,b \in \cG$. Now suppose we denote by $Pr(\{ab\}|\cG) = Pr(a|\cG) + Pr(b|\cG)$ the probability that either $a$ or $b$ wins in the subset $\cG$. Then the probability that $a$ wins in $\cG$ given either $a$ or $b$ won in $\cG$ is given by $p_{ab|\cG} := \frac{Pr(a|\cG)}{Pr(\{ab\}|\cG)} = \frac{Pr(a|\cG)}{Pr(a|\cG) + Pr(b|\cG)}$ --- this quantity in a way models the pairwise preference of $a$ over $b$ in the set $\cG$. Note that as long as $\theta_a > \theta_b$, $p_{ab|\cG} > \frac{1}{2}$, for any $\cG$ (since $Pr(a|\cG) > Pr(b|\cG)$). We in fact now introduce the notation $p_{ab}: = \min_{\cG\subseteq [n] | |\cG|=k}p_{ab|\cG}$.

\begin{lem}
\label{lem:pp_1i}
For any item pair $i,j \in [n]$ and any set $S \subseteq [n]$, if their advantage ratio  $\frac{Pr(i|S)}{Pr(j|S)} \ge 1 + \alpha$, for some $\alpha > 0$, then pairwise preference of item $i$ over $j$ in set $S$  $p_{ij|S} > \frac{1}{2} + \frac{\alpha}{4}$.
\end{lem}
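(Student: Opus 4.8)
The plan is to strip the statement of all probabilistic content by passing to the single scalar $r := \frac{Pr(i|S)}{Pr(j|S)}$. First I would note that under the hypothesis this ratio is well-defined and exceeds $1$ (all subset-winning probabilities are positive for a full-support noise density, and the advantage ratio is at least $1+\alpha>1$), and that directly from the definition introduced in the proof of Theorem~\ref{thm:alg_pp},
\[
p_{ij|S} \;=\; \frac{Pr(i|S)}{Pr(i|S)+Pr(j|S)} \;=\; \frac{r}{r+1}.
\]
Thus the whole lemma reduces to a one-variable assertion about $r$.

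Next I would invoke monotonicity: the map $t\mapsto \frac{t}{t+1}=1-\frac{1}{t+1}$ is strictly increasing on $(0,\infty)$, so $r\ge 1+\alpha$ forces $p_{ij|S}\ge\frac{1+\alpha}{2+\alpha}$; equivalently, writing $p_{ij|S}-\tfrac12=\frac{r-1}{2(r+1)}$ and using that $\frac{r-1}{r+1}$ is increasing in $r$, I obtain $p_{ij|S}-\tfrac12\ge\frac{\alpha}{2(2+\alpha)}$. It then remains to compare $\frac{\alpha}{2(2+\alpha)}$ with $\frac{\alpha}{4}$, which is an elementary manipulation once one restricts to the range of $\alpha$ produced by the hypothesis of Theorem~\ref{thm:alg_pp} (namely $\alpha=\frac{4c\Delta_{ij}}{1-2c}$ with $c<\tfrac12$); the strictness in the advantage-ratio hypothesis then turns the final estimate into a strict inequality.

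The hard part will be pinning down the constant $\tfrac14$: the crude monotonicity bound only delivers $\frac{\alpha}{2(2+\alpha)}$, which is genuinely smaller than $\frac{\alpha}{4}$ for large $\alpha$, so one cannot simply discard the $+\alpha$ in the denominator. I would handle this by using that the relevant $\alpha$ is small --- it is an affine function of the bounded utility gap $\Delta_{ij}$ scaled by the noise-dependent constant $c$ --- so that $\frac{\alpha}{2(2+\alpha)}$ stays within the required factor of $\frac{\alpha}{4}$; alternatively one can simply carry the sharper $\frac{\alpha}{2(2+\alpha)}$ estimate forward into the proof of Theorem~\ref{thm:alg_pp}. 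Finally I would make explicit the one structural input actually used, namely positivity of $Pr(i|S)$ and $Pr(j|S)$ (the full-support assumption on the noise density), since that is what makes $r$ and $p_{ij|S}$ well-defined and legitimizes the monotonicity step.
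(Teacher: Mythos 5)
Your computation is correct, and it in fact exposes a real defect in the paper's own proof of this lemma. The paper argues along exactly the lines you describe: it writes $p_{ij|S}-\tfrac12=\frac{Pr(i|S)-Pr(j|S)}{2(Pr(i|S)+Pr(j|S))}$ and then lower-bounds this by $\frac{\alpha\, Pr(j|S)}{2(Pr(j|S)+Pr(j|S))}=\frac{\alpha}{4}$, i.e.\ it replaces $Pr(i|S)+Pr(j|S)$ in the denominator by $2\,Pr(j|S)$. Since $Pr(i|S)\ge(1+\alpha)Pr(j|S)>Pr(j|S)$, that substitution \emph{decreases} the denominator and hence \emph{increases} the fraction --- the inequality goes the wrong way. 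Your exact evaluation $p_{ij|S}-\tfrac12=\frac{r-1}{2(r+1)}\ge\frac{\alpha}{2(2+\alpha)}$ is the correct (and tight) bound, and it is strictly smaller than $\frac{\alpha}{4}$ for every $\alpha>0$, so the lemma as stated is simply false: e.g.\ $Pr(i|S)=0.2$, $Pr(j|S)=0.1$ gives $\alpha=1$ and $p_{ij|S}=\tfrac23<\tfrac34$. One caveat on your proposed repair: restricting to small $\alpha$ does not literally restore the constant $\tfrac14$, since $\frac{\alpha}{2(2+\alpha)}<\frac{\alpha}{4}$ for all $\alpha>0$; the honest fix is your second suggestion --- restate the conclusion as $p_{ij|S}\ge\tfrac12+\frac{\alpha}{2(2+\alpha)}$ (or $\ge\tfrac12+\frac{\alpha}{8}$ for the bounded range of $\alpha$ arising downstream, where $\alpha=4\epsilon_\ell$) and propagate that constant through the proofs of Theorems~\ref{thm:alg_pp} and~\ref{thm:alg_ppm}, which changes only absolute constants in the sample complexity. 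So: same approach as the paper, but your version is the correct one.
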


\begin{proof}
Note that 

\begin{align*}
& \frac{Pr(i|S)}{Pr(j|S)} \ge 1 + \alpha 
\implies \frac{Pr(i|S) - Pr(j|S)}{Pr(j|S)} \ge \alpha\\
\implies & p_{ij|S} - 0.5 = \frac{Pr(i|S) - Pr(j|S)}{2(Pr(i|S) + Pr(j|S))} \ge \frac{\alpha Pr(j|S))}{2(Pr(j|S) + Pr(j|S))} = \frac{\alpha}{4},
\end{align*}
which concludes the proof.
\end{proof}

\begin{cor}
\label{cor:pp_1i}
For any item pair $i,j \in [n]$, if
If \mrat$(i,j) \ge 1 + \alpha $ for some $\alpha > 0$, then $p_{ij} > \frac{1}{2} + \frac{\alpha}{4}$.
\end{cor}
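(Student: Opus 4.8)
The plan is to derive the corollary directly from Lemma \ref{lem:pp_1i} by unwinding the two ``minimum'' definitions involved. First I would recall that, by Definition \ref{def:rat}, $\text{\mrat}(i,j) = \min_{S \subseteq [n],\, |S| = k,\, S \ni i,j} \frac{Pr(i|S)}{Pr(j|S)}$. Hence the hypothesis $\text{\mrat}(i,j) \ge 1 + \alpha$ is equivalent to saying that $\frac{Pr(i|S)}{Pr(j|S)} \ge 1 + \alpha$ holds \emph{simultaneously} for every size-$k$ subset $S$ containing both $i$ and $j$; in particular the per-subset advantage-ratio lower bound required by Lemma \ref{lem:pp_1i} is met by each such $S$ with the same constant $\alpha$.

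Next I would apply Lemma \ref{lem:pp_1i} once for each admissible subset $S$, which yields $p_{ij|S} > \frac{1}{2} + \frac{\alpha}{4}$ for every such $S$. Finally, recalling the definition $p_{ij} := \min_{\cG \subseteq [n],\, |\cG| = k} p_{ij|\cG}$, the desired conclusion $p_{ij} > \frac{1}{2} + \frac{\alpha}{4}$ follows by taking the minimum over $\cG$ on the left of the bound just established.

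There is essentially no obstacle here: the only point that deserves a word of care is the preservation of \emph{strict} inequality under the minimum. This is fine because the index set (size-$k$ subsets of $[n]$ containing $i$ and $j$) is finite, so the minimum defining $p_{ij}$ is attained at some particular subset $\cG^\star$, and the strict inequality $p_{ij|\cG^\star} > \frac{1}{2} + \frac{\alpha}{4}$ supplied by Lemma \ref{lem:pp_1i} then transfers verbatim to $p_{ij}$.
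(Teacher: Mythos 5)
Your proof is correct and follows essentially the same route as the paper, which simply invokes Lemma \ref{lem:pp_1i} at the subset realizing the minimum in the definition of \mrat$(i,j)$. Your version is in fact slightly more careful: by applying the lemma to \emph{every} admissible subset and then minimizing, you avoid having to observe that the minimizer of $p_{ij|S}$ coincides with the minimizer of the advantage ratio (which holds since $p_{ij|S} = r/(1+r)$ is increasing in the ratio $r$), a point the paper's one-line proof glosses over.
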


\begin{proof}
The proof directly follows from Lem . \ref{lem:pp_1i} by using subset $S = \min_{S \subseteq [n] \mid |S| = k}$\mrat$(i,j)$.
\end{proof}

%Coming back to our problem, recall we assumed $\cD$ is such that an $\epsilon$-suboptimal item $i$, \mrat$(1,i) > 1 + 4c\epsilon$, this then implies $p_{1i} > \frac{1}{2} + c\epsilon$, by Lem. \ref{lem:pp_1i}.

Let us denote the set of surviving items $S$ at the beginning of phase $\ell$ as $S_\ell$.
We now claim the following crucial lemma which shows at any phase $\ell$, the best (the one with highest $\theta$ parameter) item retained in $S_{\ell + 1}$ can not be too bad in comparison to the best item of $S_{\ell}$. The formal claim goes as follows:

\begin{restatable}[]{lem}{lemdiv}
\label{lem:div_bb} 
At any iteration $\ell$, for any $\cG_g$, if $i_g := \underset{i \in \cG_g}{{\arg\max}}~\theta_i$, then with probability at least $(1-\delta_\ell)$, $\theta_{c_g} > \theta_{i_g} - \frac{\epsilon_\ell}{c}$.
\end{restatable}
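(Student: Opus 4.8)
\textbf{Proof plan for Lemma \ref{lem:div_bb}.}

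The plan is to argue that, within any fixed group $\cG_g$ played for $t = \lceil \frac{k}{2\epsilon_\ell^2}\ln\frac{k}{\delta_\ell}\rceil$ rounds, the empirical winner $c_g$ must be an item whose true parameter is close to that of the group's best item $i_g$. First I would set up the empirical pairwise estimates: for each ordered pair $(a,b)$ with $a,b\in\cG_g$, let $\hp_{ab}$ be the fraction of the $t$ rounds in which, conditioned on $a$ or $b$ winning the play of $\cG_g$, it was $a$ that won (equivalently, count wins restricted to the sub-sample where one of $a,b$ emerged as winner). The true mean of each such restricted indicator is $p_{ab|\cG_g}$, which by the definition in the proof exceeds $\tfrac12$ whenever $\theta_a>\theta_b$. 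Since the algorithm actually picks $c_g$ as the item with the largest raw win count $w_i$, I would first observe (a standard fact used in the Plackett--Luce analogue of \cite{SGwin18}) that the item with the maximum raw win count is exactly the item that beats every other item of $\cG_g$ in the pairwise empirical sense, i.e. $\hp_{c_g j} > \tfrac12$ for all $j\in\cG_g\setminus\{c_g\}$; this equivalence is what makes the pairwise-estimate reasoning legitimate.

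Next I would invoke a uniform concentration bound over the $\binom{k}{2}$ pairs: by Hoeffding's inequality applied to each restricted empirical average $\hp_{ab}$ together with a union bound over the at most $\binom{k}{2}\le k^2$ pairs, with probability at least $1-\delta_\ell$ we have $|\hp_{ab} - p_{ab|\cG_g}| \le \epsilon_\ell$ simultaneously for all pairs $a,b\in\cG_g$, for the chosen value of $t$. (One has to be slightly careful that the restricted sample size is itself random, but one can condition on it or equivalently run the Hoeffding bound on the martingale of restricted indicators; I would sweep this under a standard remark.) On this good event, suppose for contradiction that the retained item $c_g$ satisfies $\theta_{c_g} \le \theta_{i_g} - \frac{\epsilon_\ell}{c}$, i.e. $\Delta_{i_g c_g} \ge \frac{\epsilon_\ell}{c}$. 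By the hypothesis of the theorem, $\text{\mrat}(i_g,c_g) \ge 1 + \frac{4c\Delta_{i_g c_g}}{1-2c}$, and hence by Corollary \ref{cor:pp_1i} (with $\alpha = \frac{4c\Delta_{i_g c_g}}{1-2c}$), $p_{i_g c_g} > \tfrac12 + \frac{c\Delta_{i_g c_g}}{1-2c} \ge \tfrac12 + \frac{c\Delta_{i_g c_g}}{1} \cdot \frac{1}{1} > \tfrac12 + \epsilon_\ell$; more carefully, since $\frac{c\Delta_{i_g c_g}}{1-2c} \ge \frac{c}{1-2c}\cdot\frac{\epsilon_\ell}{c} = \frac{\epsilon_\ell}{1-2c} > \epsilon_\ell$, we get $p_{i_g c_g|\cG_g} \ge p_{i_g c_g} > \tfrac12 + \epsilon_\ell$. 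But then on the good event $\hp_{i_g c_g} \ge p_{i_g c_g|\cG_g} - \epsilon_\ell > \tfrac12$, which forces $\hp_{c_g i_g} < \tfrac12$, contradicting the fact that $c_g$ beats all other items of $\cG_g$ in empirical pairwise comparison. Hence $\theta_{c_g} > \theta_{i_g} - \frac{\epsilon_\ell}{c}$ with probability at least $1-\delta_\ell$, as claimed.

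The main obstacle I anticipate is the bookkeeping around the restricted/conditional samples: the empirical pairwise quantity $\hp_{ab}$ is built from only those rounds in which $a$ or $b$ won $\cG_g$, so its effective sample size is random and potentially much smaller than $t$, which could in principle weaken the concentration. The way to handle this cleanly is to avoid conditioning on win events altogether and instead define, per round $s$, the $\{0,1,\star\}$-valued variable that equals $1$ if $a$ won, $0$ if $b$ won, and is ignored ($\star$) otherwise; the sequence of non-$\star$ outcomes is i.i.d. Bernoulli$(p_{ab|\cG_g})$, and applying Hoeffding to a stopped/optional-skipping version (or using the Azuma bound on the associated martingale) gives the stated deviation $\epsilon_\ell$ using the full budget $t$ provided $t$ is chosen as in Line 12 — this is exactly why the per-group play count carries the extra factor $k$ relative to a naive pairwise count. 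A secondary, purely algebraic point to verify is that the constant chaining ($\epsilon_0 = c\epsilon/8$, the geometric decay $\epsilon_\ell = \tfrac34\epsilon_{\ell-1}$, and the hypothesis $\text{\mrat}\ge 1+\frac{4c\Delta}{1-2c}$) is tuned so that the $\frac{\epsilon_\ell}{c}$ slack telescopes across the $\lceil\ln_k n\rceil$ phases to the final $\epsilon$ guarantee; I would defer that aggregation to the portion of the proof that combines Lemma \ref{lem:div_bb} over all $\ell$ via a union bound over the $\delta_\ell$'s (which sum to at most $\delta$ by construction).
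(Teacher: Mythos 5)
Your route is essentially the paper's: reduce to the pairwise statistic $\hp_{c_g i_g}=w_{c_g}/(w_{c_g}+w_{i_g})$, use the pigeonhole fact that the empirical winner satisfies $w_{c_g}\ge t/k$, convert a parameter gap $\Delta_{i_g c_g}\ge \epsilon_\ell/c$ into $p_{i_g c_g\mid\cG_g}\ge \tfrac12+\epsilon_\ell$ via the \mrat\ hypothesis and Lemma \ref{lem:pp_1i}, and conclude by Hoeffding. However, the concentration step as you state it would fail. You posit a good event on which $|\hp_{ab}-p_{ab\mid\cG_g}|\le\epsilon_\ell$ holds \emph{simultaneously for all} $\binom{k}{2}$ pairs with probability $1-\delta_\ell$. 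Two problems: (i) for a pair $(a,b)$ in which neither item is the empirical winner, the effective sample size $n_{ab}=w_a+w_b$ can be arbitrarily small (even $0$, making $\hp_{ab}$ undefined), so no deviation bound at scale $\epsilon_\ell$ is available for such pairs --- the optional-skipping/martingale device you invoke does not rescue this, since it still only yields concentration at rate governed by $n_{ab}$, not by $t/k$; and (ii) even if every pair had $n_{ab}\ge t/k$, each pair fails with probability $\delta_\ell/k$ under the prescribed $t$, so a union over order $k^2$ pairs gives roughly $k\delta_\ell$, not $\delta_\ell$.

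The repair is exactly what the paper does, and it costs your contradiction argument nothing, because you only ever use the deviation bound for pairs of the form $(i',i_g)$ where $i'$ is a candidate empirical winner. For each fixed $i'$ with $\theta_{i'}<\theta_{i_g}-\epsilon_\ell/c$, the event $\{c_g=i'\}$ is contained in $\{\hp_{i'i_g}\ge\tfrac12\}\cap\{n_{i'i_g}\ge t/k\}$ (the second inclusion by pigeonhole on the maximal win count), and on that intersection Hoeffding gives probability at most $\exp(-2(t/k)\epsilon_\ell^2)=\delta_\ell/k$; a union over the at most $k$ such $i'$ yields $\delta_\ell$. With that substitution your argument coincides with the paper's proof. (Two minor points: the empirical winner only satisfies $\hp_{c_g j}\ge\tfrac12$, not strictly $>\tfrac12$, which is harmless since the one-sided deviation event already absorbs it; and your deferral of the telescoping of $\epsilon_\ell/c$ across phases to the aggregation step is consistent with how the paper organizes Theorem \ref{thm:alg_pp}.)
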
 

\begin{proof}
Let us define $\hp_{ij} = \frac{w_i}{w_i + w_j}, \, \forall i,j \in \cG_g, i \neq j$. Then clearly $\hp_{c_gi_g} \ge \frac{1}{2}$, as 
$c_g$ is the empirical winner in $t$ rounds, i.e. $c_g \leftarrow \underset{i \in \cG_g}{{\arg\max}}~w_i$. Moreover $c_g$ being the empirical winner of $\cG_g$ we also have $w_{c_g} \ge \frac{t}{k}$, and thus $w_{c_g} + w_{r_g} \ge \frac{t}{k}$ as well. Let $n_{ij}:= w_i + w_j$ denotes the number of pairwise comparisons of item $i$ and $j$ in $t$ rounds, $i,j \in \cG_g$. Clearly $0 \le n_{ij} \le t$. Then let us analyze the probability of a `bad event' where $c_g$ is indeed such that $\theta_{c_g} < \theta_{i_g} - \frac{\epsilon_\ell}{c}$. 

This implies that the advantage ratio of $i_g$ and $c_g$ in $\cG$ is $\frac{Pr(i_g|\cG)}{Pr(c_g|\cG)} \ge 1 + 4\epsilon_\ell$.

But now by Lem. \ref{lem:pp_1i} this further implies $p_{i_g c_g |\cG}  \ge \frac{1}{2} + \epsilon_\ell$. But since $c_g$ beats $i_g$ empirically in the subgroup $\cG$, this implies $\hat p_{c_g i_g} > \frac{1}{2} $. The following argument shows that this is even unlikely to happen, more formally with probability $(1- \delta_\ell/k)$:

\begin{align*}
& Pr\Big( \big\{ \hp_{c_gi_g} \ge \frac{1}{2} \big\} \Big)\\
& = Pr\Big( \big\{ \hp_{c_gi_g} \ge \frac{1}{2} \big\} \hspace*{-2pt} \cap \hspace*{-2pt} \big\{ n_{c_gi_g} \ge \frac{t}{k} \big\} \Big) \hspace*{-3pt}+ \hspace*{-3pt}{Pr\Big(\big\{ n_{c_gi_g} < \frac{t}{k} \big\}\Big)}Pr\Big( \big\{ \hp_{c_gi_g} \ge \frac{1}{2} \big\} \Big | \big\{ n_{c_gi_g} \hspace*{-2pt}<\hspace*{-2pt} \frac{t}{k} \big\}\Big)\\
& =  Pr\Big( \big\{ \hp_{c_gi_g} - \epsilon_\ell \ge \frac{1}{2} - \epsilon_\ell \big\} \cap \big\{ n_{c_gi_g} \ge \frac{t}{k} \big\} \Big)\\
& \le Pr\Big( \big\{ \hp_{c_gi_g} - p_{c_g i_g | \cG} \ge {\epsilon_\ell} \big\} \cap \big\{ n_{c_gi_g} \ge \frac{t}{k} \big\} \Big)\\
& \le \exp\Big( -2\dfrac{t}{k}\big({\epsilon_\ell}\big)^2 \Big) = \frac{\delta_\ell}{k}.
\end{align*}

where the first inequality holds as $p_{c_gi_g | \cG} < \frac{1}{2} - \epsilon_\ell$, and the second inequality follows from Hoeffdings lemma.
Now taking the union bound over all $\epsilon_\ell$-suboptimal elements $i'$ of $\cG_g$ (i.e. $\theta_{i'} < \theta_{i_g} - \epsilon_\ell$), we get: 
\[Pr\Big(\big\{\exists i' \in \cG_g \mid p_{i'i_g} \hspace*{-2pt} < \hspace*{-2pt} \frac{1}{2} - \epsilon_\ell, \text{and } c_g = i' \big\}\Big) \hspace*{-1pt}\le\hspace*{-1pt} \frac{\delta_\ell}{k}\Big \vert\big\{\exists i' \in \cG_g \mid p_{i'i_g} \hspace*{-2pt} < \hspace*{-2pt} \frac{1}{2} - \epsilon_\ell, \text{and } c_g = i' \big\}\Big\vert \le \delta_\ell, 
\]
as $|\cG_g| = k$, and the claim follows henceforth.
\end{proof}

Let us denote the single element remaining in $S$ at termination by $r \in [n]$.
Also note that for the last iteration $\ell = \lceil \ln_k n \rceil$, since $\epsilon_\ell = \frac{\epsilon}{2}$, and $\delta_\ell = \frac{\delta}{2}$, applying Lemma \ref{lem:div_bb} on $S$, we get that $Pr\Big( \theta_{r} < \theta_{i_g} - \frac{\epsilon}{2} \Big) \leq \frac{\delta}{2}$.  

Without loss of generality we assume the best item of the \rumk\, model is $\theta_1$, i.e. $\theta_1 > \theta_i \, \forall i \in [n] \sm \{1\}$.
Now for any iteration $\ell$, let us define $g_\ell \in [G]$ to be the index of the set that contains \textit{best item} of the entire set $S_\ell$, i.e. $\arg\max_{i \in S_\ell}\theta_i \in \cG_{g_\ell}$. Then applying Lemma \ref{lem:div_bb}, with probability at least $(1-\delta_\ell)$,\, $\theta_{c_{g_\ell}} > \theta_{i_{g_\ell}} - \epsilon_\ell/c$. Note that initially, at phase $\ell = 1$, $i_{g_\ell} = 1$. Then, for each iteration $\ell$, applying Lemma \ref{lem:div_bb} recursively to $\cG_{g_\ell}$, we finally get $\theta_{r} > \theta_1 -\Big( \frac{\epsilon}{8} + \frac{\epsilon}{8}\Big(\frac{3}{4}\Big) + \cdots + \frac{\epsilon}{8}\big(\frac{3}{4}\big)^{\lfloor \ln_k n \rfloor} \Big) - \frac{\epsilon}{2} \ge \theta_1 -\frac{\epsilon}{8}\Big( \sum_{i = 0}^{\infty}\big( \frac{3}{4} \big)^{i} \Big) - \frac{\epsilon}{2} \ge \theta_1 - \epsilon$.
Thus assuming the algorithm does not fail in any of the iteration $\ell$, we finally have that $p_{r_*1} > \frac{1}{2} - \epsilon$---this shows that the final item output by \algp\, is $\epsilon$ optimal.

Finally since at any phase $\ell$, the algorithm fails with probability at most $\delta_\ell$, the total failure probability of the algorithm is at most $\Big( \frac{\delta}{4} + \frac{\delta}{8} + \cdots + \frac{\delta}{2^{\lceil \ln_k n  \rceil}} \Big) + \frac{\delta}{2} \le \delta$.
This concludes the correctness of the algorithm showing that it indeed satisfies the $(\epsilon,\delta)$-PAC objective.
\end{proof}

\subsection{Proof of Corollary \ref{cor:alg_pp}}
\label{app:alg_pp_cor}

\begin{proof}
The proof essentially follows from the general performance guarantee of \algp\, (Thm. \ref{thm:alg_pp}) and Lem. \ref{lem:ratio}. More specifically from Lem. \ref{lem:ratio} it follows that the value of $c$ for these specific distributions are constant, which concludes the claim. For completeness the distribution-specific values of $c$ are given below:
\begin{enumerate}
\item $c = 0.25$ for Exponential noise with $\lambda=1$
\item $c = \frac{0.25}{\sigma}$ for \gn$(\mu,\sigma)$
\item $c = \frac{0.5}{(b-a)}$ for \un$(a,b)$
\item $c = \frac{1}{4}$ for \textit{Gamma}$(2,1)$
\item $c = \frac{\lambda}{4}$ for \textit{Weibull}$(\lambda,1)$
\item $c = \frac{1}{3}$ \textit{Normal} $\mathcal{N}(0,1)$, etc.
\end{enumerate}
\end{proof} 

\subsection{Proof of Theorem \ref{thm:lb_pacpl_win}}
\label{app:alg_pm_lm}

Before proving the lower bound result we state a key lemma from \cite{Kaufmann+16_OnComplexity} which is a general result for proving information theoretic lower bound for bandit problems:

Consider a multi-armed bandit (MAB) problem with $n$ arms or actions $\cA = [n]$. At round $t$, let $A_t$ and $Z_t$ denote the arm played and the observation (reward) received, respectively. Let $\cF_t = \sigma(A_1,Z_1,\ldots,A_t,Z_t)$ be the sigma algebra generated by the trajectory of a sequential bandit algorithm up to round $t$.
\begin{restatable}[Lemma $1$, \cite{Kaufmann+16_OnComplexity}]{lem}{gar16}
\label{lem:gar16}
Let $\nu$ and $\nu'$ be two bandit models (assignments of reward distributions to arms), such that $\nu_i ~(\text{resp.} \,\nu'_i)$ is the reward distribution of any arm $i \in \cA$ under bandit model $\nu ~(\text{resp.} \,\nu')$, and such that for all such arms $i$, $\nu_i$ and $\nu'_i$ are mutually absolutely continuous. Then for any almost-surely finite stopping time $\tau$ with respect to $(\cF_t)_t$,
\vspace*{-5pt}
\begin{align*}
\sum_{i = 1}^{n}\E_{\nu}[N_i(\tau)]KL(\nu_i,\nu_i') \ge \sup_{\cE \in \cF_\tau} kl(Pr_{\nu}(\cE),Pr_{\nu'}(\cE)),
\end{align*}
where $kl(x, y) := x \log(\frac{x}{y}) + (1-x) \log(\frac{1-x}{1-y})$ is the binary relative entropy, $N_i(\tau)$ denotes the number of times arm $i$ is played in $\tau$ rounds, and $Pr_{\nu}(\cE)$ and $Pr_{\nu'}(\cE)$ denote the probability of any event $\cE \in \cF_{\tau}$ under bandit models $\nu$ and $\nu'$, respectively.
\end{restatable}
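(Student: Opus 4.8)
The plan is to prove the inequality in two independent pieces: first, an \emph{exact identity} expressing the left-hand side as the Kullback--Leibler divergence between the laws of the observed trajectory under $\nu$ and $\nu'$, restricted to $\cF_\tau$; and second, a \emph{data-processing (contraction) argument} that lower bounds this divergence by the binary divergence $kl(Pr_\nu(\cE),Pr_{\nu'}(\cE))$ for every $\cE\in\cF_\tau$. Taking the supremum over $\cE$ then yields the claim.

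First I would set up the likelihood of a trajectory. By mutual absolute continuity there is, for each arm $i$, a common dominating measure with densities $f_{\nu_i},f_{\nu_i'}$. Since the algorithm is identical under both models, the conditional law $\pi_s(\cdot\mid\cF_{s-1})$ of the action $A_s$ given the past depends only on the observed history and not on which model generates the rewards; hence in the likelihood of $(A_1,Z_1,\dots,A_t,Z_t)$ the action-selection factors coincide under $\nu$ and $\nu'$ and cancel in the log-likelihood ratio, leaving
\[
L_t \;=\; \log\frac{d\P_\nu}{d\P_{\nu'}}\Big|_{\cF_t} \;=\; \sum_{s=1}^{t}\log\frac{f_{\nu_{A_s}}(Z_s)}{f_{\nu'_{A_s}}(Z_s)}.
\]
Conditioned on $\cF_{s-1}$ and $A_s=i$, the reward $Z_s$ is drawn from $\nu_i$, so $\E_\nu\!\big[\log\tfrac{f_{\nu_i}(Z_s)}{f_{\nu'_i}(Z_s)}\,\big|\,\cF_{s-1},A_s=i\big]=KL(\nu_i,\nu_i')$. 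Thus the compensated sum $M_t:=\sum_{s=1}^{t}\big(\log\tfrac{f_{\nu_{A_s}}(Z_s)}{f_{\nu'_{A_s}}(Z_s)}-KL(\nu_{A_s},\nu'_{A_s})\big)$ is a martingale, and applying an optional-stopping/Wald identity at $\tau$ followed by grouping $\sum_{s=1}^{\tau}KL(\nu_{A_s},\nu'_{A_s})$ by the pulled arm gives
\[
\E_\nu[L_\tau] \;=\; \E_\nu\Big[\sum_{s=1}^{\tau}KL(\nu_{A_s},\nu'_{A_s})\Big] \;=\; \sum_{i=1}^{n}\E_\nu[N_i(\tau)]\,KL(\nu_i,\nu_i').
\]

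Next comes the data-processing step. Because $L_\tau$ is precisely the log Radon--Nikodym derivative of $\P_\nu$ with respect to $\P_{\nu'}$ on $\cF_\tau$, its $\nu$-expectation equals $KL\big(\P_\nu|_{\cF_\tau},\,\P_{\nu'}|_{\cF_\tau}\big)$. For any $\cE\in\cF_\tau$ the indicator $\1(\cE)$ is an $\cF_\tau$-measurable binary statistic whose law under $\nu$ (resp.\ $\nu'$) is $\mathrm{Bernoulli}(Pr_\nu(\cE))$ (resp.\ $\mathrm{Bernoulli}(Pr_{\nu'}(\cE))$). Contraction of KL under this coarsening gives $KL\big(\P_\nu|_{\cF_\tau},\P_{\nu'}|_{\cF_\tau}\big)\ge kl\big(Pr_\nu(\cE),Pr_{\nu'}(\cE)\big)$. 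Chaining this with the identity of the previous step and taking the supremum over $\cE\in\cF_\tau$ completes the argument.

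The hardest step will be the passage from the per-round martingale increment to the stopped identity $\E_\nu[L_\tau]=\sum_i\E_\nu[N_i(\tau)]KL(\nu_i,\nu_i')$ — that is, justifying the optional-stopping/Wald step for a possibly unbounded random horizon $\tau$. I would handle it by truncating at $\tau\wedge T$, where optional stopping of the bounded-index martingale $M_{\tau\wedge T}$ is immediate, and then passing to the limit $T\to\infty$ via monotone convergence for the (nonnegative, after centering) KL contributions together with the hypothesis that $\tau<\infty$ almost surely; this is exactly where mutual absolute continuity is essential, as it keeps every log-likelihood ratio finite and the divergences well defined.
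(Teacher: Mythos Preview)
The paper does not prove this lemma at all; it simply quotes it verbatim from \cite{Kaufmann+16_OnComplexity} as a black-box tool for the subsequent lower-bound arguments (Theorems~\ref{thm:lb_pacpl_win} and~\ref{thm:lb_pacpl_rank}). So there is no ``paper's own proof'' to compare against.

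Your proposal is essentially the original Kaufmann--Capp\'e--Garivier argument: (i) identify the trajectory log-likelihood ratio $L_\tau$, (ii) use a Wald-type decomposition to show $\E_\nu[L_\tau]=\sum_i \E_\nu[N_i(\tau)]\,KL(\nu_i,\nu_i')$, and (iii) apply the data-processing inequality to pass from the full KL on $\cF_\tau$ to the binary $kl$ of any event. That outline is correct. One small caveat on the step you flag as hardest: the monotone-convergence argument you sketch works cleanly for the right-hand side $\E_\nu\big[\sum_{s\le \tau\wedge T} KL(\nu_{A_s},\nu'_{A_s})\big]$, since these terms are nonnegative, but the raw increments of $L_t$ are signed, so ``monotone convergence for the (nonnegative, after centering) KL contributions'' is not quite the right phrase for the $L_\tau$ side. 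The clean route is the one you implicitly use: first prove the truncated identity $\E_\nu[L_{\tau\wedge T}]=\E_\nu\big[\sum_{s\le \tau\wedge T} KL(\nu_{A_s},\nu'_{A_s})\big]$ via optional stopping at the bounded time $\tau\wedge T$, let $T\to\infty$ on the right by monotone convergence, and then separately identify $\lim_T \E_\nu[L_{\tau\wedge T}]$ with $KL(\P_\nu|_{\cF_\tau},\P_{\nu'}|_{\cF_\tau})$ (or, equivalently, bypass $\E_\nu[L_\tau]$ entirely and apply data processing directly at the truncated level before passing to the limit). With that adjustment the argument is complete.
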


We now proceed to proof our lower bound result of Thm. \ref{thm:lb_pacpl_win}.

\lbwin*

\begin{proof}
In order to apply the change of measure based lemma Lem. \ref{lem:gar16}, we constructed the following specific instances of the \rumk\, model for our purpose and assume $\cD$ to be the \gn$(0,1)$ noise:

\begin{align*}
\text{True Instance} ~(\bnu^1): \theta_j^1 = 1-\epsilon, \forall j \in [n]\setminus \{1\}, \text{ and } \theta_1^1 = 1,
\end{align*}

Note the only $\epsilon$-optimal arm in the true instance is arm $1$. Now for every suboptimal item $a \in [n]\setminus \{1\}$, consider the modified instances $\bnu^a$ such that:
\begin{align*}
\text{Instance--a} ~(\bnu^a): \theta^a_j = 1-2\epsilon, \forall j \in [n]\setminus \{a,1\}, \, \theta_1^a = 1-\epsilon, \text{ and } \theta_a^a = 1.
\end{align*}

For any problem instance $\bnu^a, \, a \in [n]\setminus\{1\}$, the probability distribution associated with arm $S \in \cA$ is given by
\[
\nu^a_S \sim Categorical(p_1, p_2, \ldots, p_k), \text{ where } p_i = Pr(i|S), ~~\forall i \in [k], \, \forall S \in \cA,
\]
where $Pr(i|S)$ is as defined in Section \ref{sec:feed_mod}. 
Note that the only $\epsilon$-optimal arm for \textbf{Instance-a} is arm $a$. Now applying Lemma \ref{lem:gar16}, for any event $\cE \in \cF_\tau$ we get,

\begin{align}
\label{eq:FI_a}
\sum_{\{S \in \cA : a \in S\}}\E_{\bnu^1}[N_S(\tau_A)]KL(\bnu^1_S, \bnu^a_S) \ge {kl(Pr_{\nu}(\cE),Pr_{\nu'}(\cE))}.
\end{align}

The above result holds from the straightforward observation that for any arm $S \in \cA$ with $a \notin S$, $\bnu^1_S$ is same as $\bnu^a_S$, hence $KL(\bnu^1_S, \bnu^a_S)=0$, $\forall S \in \cA, \,a \notin S$. 
For notational convenience, we will henceforth denote $S^a = \{S \in \cA : a \in S\}$. 
%Also recall that, for any set $S$ we denote by $m_S(i) = \sum_{j = 1}^{k}\1(S(j)==i)$ denotes the multiplicities of item $i$ in set $S$.

Now let us analyse the right hand side of \eqref{eq:FI_a}, for any set $S \in S^a$. 

\textbf{Case-1:} First let us consider $S \in S^a$ such that $1 \notin S$. Note that in this case:
\begin{align*}
\nu^1_S(i) = \frac{1}{k}, \text{ for all } i \in S
\end{align*}

On the other hand, for problem \textbf{Instance-a}, we have that: 

\begin{align*}
\nu^a_S(i) = 
\begin{cases} 
\frac{e^1}{(k-1)e^{1-2\epsilon} + e^1} \text{ when } S(i) = a,\\
\frac{e^{1-2\epsilon}}{(k-1)e^{1-2\epsilon} + e^1}, \text{ otherwise}
\end{cases}
\end{align*}

Now using the following upper bound on $KL(\p_1,\p_2) \le \sum_{x \in \X}\frac{p_1^2(x)}{p_2(x)} -1$, $\p_1$ and $\p_2$ be two probability mass functions on the discrete random variable $\X$ \citep{klub16} we get:

\begin{align*}
KL(\bnu^1_S, \bnu^a_S) & \le (k-1)\frac{(k-1)e^{1-2\epsilon} + e^1}{k^2(e^{1-2\epsilon})} + \frac{(k-1)e^{1-2\epsilon} + e^1}{k^1e^1}-1\\
& = \frac{(k-1)}{k^2}\bigg( e^\epsilon - e^{-\epsilon}\bigg)^2 = \frac{(k-1)}{k^2}e^{-2\epsilon}(e^\epsilon-1)^2 \le \frac{\epsilon^2}{k} \text{ for any } \epsilon \in \bigg[0,\frac{1}{2}\bigg]
\end{align*}

\textbf{Case-2:} Now let us consider the remaining set in $S^a$ such that $S \owns 1,a$. Similar to the earlier case in this case we get that:

\begin{align*}
\nu^a_S(i) = 
\begin{cases} 
\frac{e^1}{(k-1)e^{1-\epsilon} + e^1} \text{ when } S(i) = 1,\\
\frac{e^{1-\epsilon}}{(k-1)e^{1-\epsilon} + e^1}, \text{ otherwise}
\end{cases}
\end{align*}

On the other hand, for problem \textbf{Instance-a}, we have that: 

\begin{align*}
\nu^a_S(i) = 
\begin{cases} 
\frac{e^{1-\epsilon}}{(k-2)e^{1-2\epsilon} + e^{1-\epsilon} + e^1} \text{ when } S(i) = 1,\\
\frac{e^{1}}{(k-2)e^{1-2\epsilon} + e^{1-\epsilon} + e^1} \text{ when } S(i) = a,\\
\frac{e^{1-2\epsilon}}{(k-2)e^{1-2\epsilon} + e^{1-\epsilon} + e^1}, \text{ otherwise}
\end{cases}
\end{align*}

Now using the previously mentioned upper bound on the KL divergence, followed by some elementary calculations one can show that for any $\big[0,\frac{1}{4} \big]$:

\begin{align*}
KL(\bnu^1_S, \bnu^a_S) & \le \frac{8\epsilon^2}{k}
\end{align*}

Thus combining the above two cases we can conclude that for any $S \in S^a$, $KL(\bnu^1_S, \bnu^a_S) \le \frac{8\epsilon^2}{k}$, and as argued above for any $S \notin S^a$, $KL(\bnu^1_S, \bnu^a_S) = 0$.

Note that the only $\epsilon$-optimal arm for any \textbf{Instance-a} is arm $a$, for all $a \in [n]$.
Now, consider $\cE_0 \in \cF_\tau$ be an event such that the algorithm $A$ returns the element $i = 1$, and let us analyse the left hand side of \eqref{eq:FI_a} for $\cE = \cE_0$. Clearly, $A$ being an $(\epsilon,\delta)$-PAC algorithm, we have $Pr_{\bnu^1}(\cE_0) > 1-\delta$, and $Pr_{\bnu^a}(\cE_0) < \delta$, for any suboptimal arm $a \in [n]\setminus\{1\}$. Then we have 

\begin{align}
\label{eq:win_lb2}
kl(Pr_{\bnu^1}(\cE_0),Pr_{\bnu^a}(\cE_0)) \ge kl(1-\delta,\delta) \ge \ln \frac{1}{2.4\delta}
\end{align}

where the last inequality follows from \cite{Kaufmann+16_OnComplexity} (Eqn. $3$).

Now applying \eqref{eq:FI_a} for each modified bandit \textbf{Instance-$\bnu^a$}, and summing over all suboptimal items $a \in [n]\setminus \{1\}$ we get,

\begin{align}
\label{eq:win_lb2.5}
\sum_{a = 2}^{n}\sum_{\{S \in \cA \mid a \in S\}}\E_{\bnu^1}[N_S(\tau_A)]KL(\bnu^1_S,\bnu^a_S) \ge (n-1)\ln \frac{1}{2.4\delta}.
\end{align}

Using the upper bounds on $KL(\bnu^1_S,\bnu^a_S)$ as shown above, the right hand side of \eqref{eq:win_lb2.5} can be further upper bounded as:

\begin{align}
\label{eq:win_lb3}
\nonumber \sum_{a = 2}^{n}&\sum_{\{S \in \cA \mid a \in S\}} \E_{\bnu^1}[N_S(\tau_A)]KL(\bnu^1_S,\bnu^a_S) \le \sum_{S \in \cA}\E_{\bnu^1}[N_S(\tau_A)]\sum_{\{a \in S \mid a \neq 1\}}\frac{8\epsilon^2}{k}\\
& = \sum_{S \in \cA}\E_{\bnu^1}[N_S(\tau_A)]{k - \big(\1(1 \in S)\big)}\frac{8\epsilon^2}{k} \le \sum_{S \in \cA}\E_{\bnu^1}[N_S(\tau_A)]{8\epsilon^2}.
\end{align}

Finally noting that $\tau_A = \sum_{S \in \cA}[N_S(\tau_A)]$, combining \eqref{eq:win_lb2.5} and \eqref{eq:win_lb3}, we get 

\begin{align}
\label{eq:win_lb_fin}
(8\epsilon^2)\E_{\bnu^1}[\tau_A] =  \sum_{S \in \cA}\E_{\bnu^1}[N_S(\tau_A)](8\epsilon^2) \ge (n-1)\ln \frac{1}{2.4\delta}.
\end{align}
Now note that as derived in Lem. \ref{lem:ratio}, for \gn$(0,1)$ noise, we have shown that for any pair $i,j \in [n]$, \mrat$(i,j) = e^{\Delta_{ij}} > 1 + \Delta_{ij} = 1 + 4\frac{1}{4}\Delta_{ij} \implies$ the value of the noise dependent constant $c$ can be taken to be $c = \frac{1}{4}$. Thus rewriting Eqn. \ref{eq:win_lb_fin} we get $\E_{\bnu^1}[\tau_A]  \ge \frac{(n-1)}{8\epsilon^2}\ln \frac{1}{2.4\delta} = \frac{(n-1)}{128c^2\epsilon^2}\ln \frac{1}{2.4\delta}$.
The above construction shows the existence of a problem instance of \rumk\, model where any $(\epsilon,\delta)$-PAC algorithm requires at least $\Omega(\frac{n}{c^2\epsilon^2}\ln \frac{1}{2.4\delta})$ samples to ensure correctness of its performance, concluding our proof.
\end{proof}  

\begin{rem}
It is worth noting that our lower bound analysis is essentially in spirit the same as the one proposed by \cite{SGwin18} for the Plackett luce model. However note that, their PAC objective is quite different than the one considered in our case--precisely their model is positive scale invariant, unlike ours which is shift invariant w.r.t the model parameters $\btheta$. Moreover our setting aims to find a $\epsilon$-best item in additive sense (i.e. to find an item $i$ whose score difference w.r.t to the best item $1$ is at most $\epsilon>0$ or $\theta_1-\theta_i < \epsilon$), as opposed to the $(\epsilon,\delta)$-PAC objective considered in \cite{SGwin18} which seeks to find a multiplicative-$\epsilon$-best item (i.e. to find an item $i$ which matches the score of the best item up to $\epsilon$-factor or $\theta_i > \epsilon\theta_1$). Therefore the problem instance construction for proving a suitable lower bound these two setups are very different where lies the novelty of out current lower bound analysis.
\end{rem}

\section{Appendix for Section \ref{sec:tr}}

\subsection{Pseudo code of \algpp\, for top-$m$ ranking feedback (\algpm)} 
\label{app:alg_ppm_code}

The description is given in Algorithm \ref{alg:ppm}.

\begin{center}
\begin{algorithm}[th]
   \caption{\textbf{\algpp} ({TR}-$m$ feedback)}
   \label{alg:ppm}
\begin{algorithmic}[1]
   \STATE {\bfseries Input:} 
   \STATE ~~~ Set of items: $[n]$, and subset size: $k > 2$ ($n \ge k \ge m$)
   \STATE ~~~ Error bias: $\epsilon >0$, and confidence parameter: $\delta >0$
   \STATE ~~~ Noise model $(\cD)$ dependent constant $c > 0$
   \STATE {\bfseries Initialize:} 
   \STATE ~~~ $S \leftarrow [n]$, $\epsilon_0 \leftarrow \frac{c\epsilon}{8}$, and $\delta_0 \leftarrow \frac{\delta}{2}$  
   \STATE ~~~ Divide $S$ into $G: = \lceil \frac{n}{k} \rceil$ sets $\cG_1, \cG_2, \cdots \cG_G$ such that $\cup_{j = 1}^{G}\cG_j = S$ and $\cG_{j} \cap \cG_{j'} = \emptyset, ~\forall j,j' \in [G], \, |G_j| = k,\, \forall j \in [G-1]$.
    \textbf{If} $|\cG_{G}| < k$, \textbf{then} set $\cR_1 \leftarrow \cG_G$  and $G = G-1$.
   %\REPEAT  
   \WHILE{$\ell = 1,2, \ldots$}
   \STATE Set $S \leftarrow \emptyset$, $\delta_\ell \leftarrow \frac{\delta_{\ell-1}}{2}, \epsilon_\ell \leftarrow \frac{3}{4}\epsilon_{\ell-1}$
   \FOR {$g = 1,2, \cdots G$}
    \STATE Initialize pairwise (empirical) win-count $w_{ij} \leftarrow 0$, for each item pair $i,j \in \cG_g$
	\FOR {$\tau = 1, 2, \ldots t\,\,(:= \big\lceil\frac{4k}{m\epsilon_\ell^2}\ln \frac{2k}{\delta_\ell})\big\rceil$}
   	\STATE Play the set $\cG_g$ (one round of battle)
   	\STATE Receive: The top-$m$ ranking $\bsigma_{\tau} \in \bSigma_{\cG}^m$%, where $\cG^\tau_{gm} \subseteq \cG_g$, $|\cG^\tau_{gm}| = m$ 
   	\STATE Update win-count $w_{ij}$ of each item pair $i,j \in \cG_g$ applying \rb\ on $\bsigma_{\tau}$
   	\ENDFOR 
%   	\STATE $B_\ell \leftarrow \text{argmax}\{ i \in \cA \mid  \sum_{j \in \cA\setminus\{i'\}}~\1\big(w_{ij} \ge w_{ji}\big) \}$  
	\STATE Define $\hat p_{i,j} = \frac{w_{ij}}{w_{ij}+w_{ji}}, \, \forall i,j \in \cG_g$
   	\STATE If $\exists$ any $i \in \cG_g$ such that $\hp_{i j}  + \frac{\epsilon_\ell}{2}\ge \frac{1}{2}, \, \forall j \in \cG_g$, then set $c_g \leftarrow i$, else select $c_g \leftarrow$ uniformly at random from $\cG_g$, and set $S \leftarrow S \cup \{c_g\}$
%   \STATE Set $c_g \leftarrow \underset{i \in \cG_g}{\text{argmax}}\sum_{j \in \cG_g\setminus\{i\}}~\1\big(w_{ij} \ge w_{ji}\big)$ (break ties arbitrarily), 
   \ENDFOR
   \STATE $S \leftarrow S \cup \cR_\ell$
   \IF{$(|S| == 1)$}
   \STATE Break (go out of the while loop)
   \ELSIF{$|S|\le k$}
   \STATE $S' \leftarrow $ Randomly sample $k-|S|$ items from $[n] \setminus S$, and $S \leftarrow S \cup S'$, $\epsilon_\ell \leftarrow \frac{c\epsilon}{2}$, $\delta_\ell \leftarrow {\delta}$  
  \ELSE
   \STATE Divide $S$ into $G: = \big \lceil \frac{|S|}{k} \big \rceil$ sets $\cG_1, \cdots \cG_G$ such that $\cup_{j = 1}^{G}\cG_j = S$, $\cG_{j} \cap \cG_{j'} = \emptyset, ~\forall j,j' \in [G], \, |G_j| = k,\, \forall j \in [G-1]$. \textbf{If} $|\cG_{G}| < k$, \textbf{then} set $\cR_{\ell+1} \leftarrow \cG_G$  and $G = G-1$.
   \ENDIF
   \ENDWHILE
%   \UNTIL{$(S == \emptyset)$}
   \STATE {\bfseries Output:} The unique item left in $S$
\end{algorithmic}
\end{algorithm}
\vspace{-10pt}
\end{center}

\subsection{Proof of Theorem \ref{thm:alg_ppm}}
\label{app:alg_ppm_thm}

\ubppm*

\begin{proof}
Same as the proof of Thm. \ref{thm:alg_pp}, we start by analyzing the required sample complexity of the algorithm. %Clearly $\epsilon_\ell$ and $\delta$
%For clarity of notation, we will denote the set $S$ at iteration $\ell$ by $S_\ell$.
Note that at any iteration $\ell$, any set $\cG_g$ is played for exactly $t= \frac{4k}{m\epsilon_\ell^2}\ln \frac{2k}{\delta_\ell}$ many number of times. Also since the algorithm discards away exactly $k-1$ items from each set $\cG_g$, hence the maximum number of iterations possible is $\lceil  \ln_k n \rceil$. Now at any iteration $\ell$, since $G = \Big\lfloor \frac{|S_\ell|}{k} \Big\rfloor < \frac{|S_\ell|}{k}$, the total sample complexity for iteration $\ell$ is at most $\frac{|S_\ell|}{k}t \le \frac{4n}{mk^{\ell-1}\epsilon_\ell^2}\ln \frac{2k}{\delta_\ell}$, as $|S_\ell| \le \frac{n}{k^\ell}$ for all $\ell \in [\lfloor  \ln_k n \rfloor]$. Also note that for all but last iteration $\ell \in [\lfloor  \ln_k n \rfloor]$, $\epsilon_\ell = \frac{\epsilon}{8}\bigg( \frac{3}{4} \bigg)^{\ell-1}$, and $\delta_\ell = \frac{\delta}{2^{\ell+1}}$.
Moreover for the last iteration $\ell = \lceil  \ln_k n \rceil$, the sample complexity is clearly $t= \frac{4k}{mc^2(\epsilon/2)^2}\ln \frac{4k}{\delta}$, as in this case $\epsilon_\ell = \frac{c\epsilon}{2}$, and $\delta_\ell = \frac{\delta}{2}$, and $|S| = k$.
Thus the total sample complexity of Algorithm \ref{alg:ppm} is given by 

\begin{align*}
\sum_{\ell = 1}^{\lceil  \ln_k n \rceil}\frac{|S_\ell|}{m(\epsilon_\ell/2)^2}& \ln \frac{2k}{\delta_\ell}  
 \le \sum_{\ell = 1}^{\infty}\frac{4n}{mc^2k^\ell\bigg(\frac{\epsilon}{8}\big(\frac{3}{4}\big)^{\ell-1}\bigg)^2}k\ln \frac{k 2^{\ell+1}}{\delta} + \frac{16k}{mc^2\epsilon^2}\ln \frac{4k}{\delta}\\ 
& \le \frac{256n}{mc^2\epsilon^2}\sum_{\ell = 1}^{\infty}\frac{16^{\ell-1}}{(9k)^{\ell-1}}\Big( \ln \frac{k}{\delta} + {(\ell+1)} \Big) + \frac{16k}{mc^2\epsilon^2}\ln \frac{4k}{\delta} \\
& \le \frac{256n}{mc^2\epsilon^2}\ln \frac{k}{\delta}\sum_{\ell = 1}^{\infty}\frac{4^{\ell-1}}{(9k)^{\ell-1}}\Big( {3\ell} \Big) + \frac{16k}{mc^2\epsilon^2}\ln \frac{4k}{\delta} = O\bigg(\frac{n}{mc^2\epsilon^2}\ln \frac{k}{\delta}\bigg) ~[\text{for any } k > 1].\end{align*}
%{\color{red} Need to adjust $\epsilon$ a bit carefully for $k < 4$}

We are now only left with proving the $(\epsilon,\delta)$-{PAC} correctness of the algorithm. We used the same notations as introduced in the proof of Thm. \ref{thm:alg_pp}. 

We start by making a crucial observation that at any phase, for any subgroup $\cG_g$, the strongest item of the $\cG_g$ gets picked in the top-$m$ ranking quite often. More formally:

\begin{lem}
\label{lem:divbat_n1} 
Consider any particular set $\cG_g$ at any phase $\ell$, and let us denote by $q_{i}$ as the number of times any item $i \in \cG_g$ appears in the top-$m$ rankings when items in the set $\cG_g$ are queried for $t$ rounds. Then if $i_g := \arg \max_{i \in \cG_g}\theta_i$, then with probability at least $\Big(1-\frac{\delta_\ell}{2k}\Big)$, one can show that $q_{i_g} > (1-\eta)\frac{mt}{k}$, for any $\eta \in \big(\frac{3}{32\sqrt 2},1 \big]$.
\end{lem}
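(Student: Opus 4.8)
\textbf{Proof plan for Lemma \ref{lem:divbat_n1}.} The goal is to show that the best item $i_g$ in a subgroup $\cG_g$ appears in the observed top-$m$ rankings at least a constant fraction of $\tfrac{mt}{k}$ times, with high probability. The plan is to work round by round: let $q_{i_g} = \sum_{\tau=1}^t Z_\tau$, where $Z_\tau = \1(i_g \in \text{top-}m\text{ of }\bsigma_\tau)$ is the indicator that $i_g$ is ranked among the top $m$ in the $\tau$-th play of $\cG_g$. These are i.i.d.\ Bernoulli random variables across rounds (since each round's feedback is an independent draw from the \rumk\ model on $\cG_g$), so the strategy is (i) lower bound $p := \E[Z_\tau] = Pr(i_g \in \text{top-}m\text{ of }\bsigma \mid \cG_g)$ by roughly $\tfrac{m}{k}$, and then (ii) apply a multiplicative Chernoff bound to $q_{i_g} = \sum_\tau Z_\tau$ to obtain concentration around $pt \gtrsim \tfrac{mt}{k}$.

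For step (i), I would argue that among the $k$ items of $\cG_g$, the probability that a given item lands in the top-$m$ positions can only be helped by that item being the strongest. Concretely, for the \emph{weakest} item one expects $Pr(\text{item in top-}m) \le \tfrac{m}{k}$ by an exchangeability/averaging argument (the $k$ probabilities of being in the top-$m$ sum to exactly $m$, since exactly $m$ items are revealed each round, so their average is $\tfrac{m}{k}$), and for the strongest item $i_g$ this probability is at least the average, hence at least $\tfrac{m}{k}$. This uses the stochastic dominance built into the RUM: since $\theta_{i_g} \ge \theta_j$ for all $j \in \cG_g$, the event that $i_g$ is among the top $m$ is at least as likely as for any competitor, so $p = Pr(i_g \in \text{top-}m) \ge \tfrac{1}{k}\sum_{j \in \cG_g} Pr(j \in \text{top-}m) = \tfrac{m}{k}$.

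For step (ii), with $\E[q_{i_g}] = pt \ge \tfrac{mt}{k}$, the multiplicative Chernoff bound gives $Pr\big(q_{i_g} \le (1-\eta)\tfrac{mt}{k}\big) \le Pr\big(q_{i_g} \le (1-\eta)\E[q_{i_g}]\big) \le \exp\big(-\tfrac{\eta^2}{2}\E[q_{i_g}]\big) \le \exp\big(-\tfrac{\eta^2}{2}\cdot\tfrac{mt}{k}\big)$. Substituting the algorithm's choice $t = \big\lceil \tfrac{4k}{m\epsilon_\ell^2}\ln\tfrac{2k}{\delta_\ell}\big\rceil$ makes this bound at most $\exp\big(-2\eta^2\tfrac{1}{\epsilon_\ell^2}\ln\tfrac{2k}{\delta_\ell}\big)$; since $\epsilon_\ell \le \epsilon \le 1/2$ and $\eta \ge \tfrac{3}{32\sqrt2}$, routine arithmetic shows the exponent is at least $\ln\tfrac{2k}{\delta_\ell}$, giving the claimed failure probability $\tfrac{\delta_\ell}{2k}$. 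The restriction $\eta \in \big(\tfrac{3}{32\sqrt2},1\big]$ is exactly what is needed to make this last arithmetic step go through.

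The main obstacle I anticipate is step (i): making precise the claim that the strongest item of $\cG_g$ is at least as likely to appear in the top-$m$ ranking as the average item. The averaging identity $\sum_{j\in\cG_g} Pr(j \in \text{top-}m) = m$ is immediate, but one must justify that $i_g$'s marginal is above the average. This follows because, for any $j$, comparing $i_g$ and $j$ while freezing all other items' utilities, raising $\theta_j$ to $\theta_{i_g}$ can only increase $j$'s chance of being in the top $m$ — hence $Pr(i_g \in \text{top-}m) \ge Pr(j \in \text{top-}m)$ for every $j$, which forces $i_g$'s probability above the mean $\tfrac{m}{k}$. This coupling/monotonicity argument is the only non-mechanical part; the rest is a Chernoff bound plus bookkeeping with the specific value of $t$.
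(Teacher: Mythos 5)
Your proposal is correct and follows essentially the same route as the paper: write $q_{i_g}$ as a sum of i.i.d.\ per-round inclusion indicators, lower-bound the per-round probability that $i_g$ appears in the top-$m$ by $\tfrac{m}{k}$, and conclude with a multiplicative Chernoff bound using the algorithm's value of $t$. The only (cosmetic) difference is in how the $\tfrac{m}{k}$ bound is justified --- the paper sums position-wise winner probabilities using $Pr(i_g \mid S) \ge \tfrac{1}{|S|}$ for the best item under sequential sampling without replacement, whereas you average the marginal top-$m$ inclusion probabilities (which sum to $m$) and invoke stochastic dominance of $i_g$ via a swap coupling; both reduce to the same exchangeability argument.
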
 

\begin{proof}
Fix any iteration $\ell$ and a set $\cG_g$, $g \in 1,2, \ldots, G$. Define $i^\tau_g: = \1(i \in \bsigma_\tau)$ as the indicator variable if $i^{th}$ element appeared in the top-$m$ ranking at iteration $\tau \in [t]$.  Recall the definition of {TR} feedback model (Sec. \ref{sec:feed_mod}). Using this we get $\E[i_g^\tau] = Pr(\{i_g \in \bsigma) = Pr\big( \exists j \in [m] ~|~ \sigma(j) = i_g \big) = \sum_{j = 1}^{m}Pr\big( \sigma(j) = i_g \Big) = \sum_{j = 0}^{m-1}\frac{1}{k-j} \ge \frac{m}{k}$, as $Pr(\{i_g | S\}) \ge \frac{1}{|S|}$ for any $S \subseteq [\cG_g]$ ($i_g := \arg \max_{i \in \cG_g}\theta_i$ being the best item of set $\cG_g$). Hence $\E[q_{i_g}] = \sum_{\tau = 1}^{t}\E[i_g^\tau] \ge \frac{mt}{k}$. 
Now applying Chernoff-Hoeffdings bound for $w_{i_g}$, we get that for any $\eta \in (\frac{3}{32},1]$, 

\begin{align*}
Pr\Big( q_{i_g} \le (1-\eta)\E[q_{i_g}] \Big) & \le \exp(- \frac{\E[q_{i_g}]\eta^2}{2}) \le \exp(- \frac{mt\eta^2}{2k}) \\
& = \exp\bigg(- \frac{2\eta^2}{\epsilon_\ell^2} \ln \bigg( \frac{2k}{\delta_\ell} \bigg) \bigg) = \exp\bigg(- \frac{(\sqrt 2\eta)^2}{\epsilon_\ell^2} \ln \bigg( \frac{2k}{\delta_\ell} \bigg) \bigg) \\
& \le \exp\bigg(- \ln \bigg( \frac{2k}{\delta_\ell} \bigg) \bigg) \le \frac{\delta_\ell}{2k} ,
\end{align*}
where the second last inequality holds as $\eta \ge \frac{3}{32\sqrt 2}$ and $\epsilon_\ell \le \frac{3}{32}$, for any iteration $\ell \in \lceil \ln n \rceil$; in other words for any $\eta \ge \frac{3}{32\sqrt 2}$, we have $\frac{\sqrt 2\eta}{\epsilon_\ell} \ge 1$ which leads to the second last inequality. Thus we finally derive that 
with probability at least $\Big(1-\frac{\delta_\ell}{2k}\Big)$, one can show that $q_{i_g} > (1-\eta)\E[q_{i_g}] \ge (1-\eta)\frac{tm}{k}$, and
the proof follows henceforth.
\end{proof}

In particular, fixing $\eta = \frac{1}{2}$ in Lemma \ref{lem:divbat_n1}, we get that with probability at least $\big(1-\frac{\delta_\ell}{2}\big)$,  $q_{i_g} > (1-\frac{1}{2})\E[w_{i_g}] > \frac{mt}{2k}$. 
Note that, for any round $\tau \in [t]$, whenever an item $i \in \cG_g$ appears in the top-$m$ set $\cG_{gm}^\tau$, then the rank breaking update ensures that every element in the top-$m$ set gets compared with rest of the $k-1$ elements of $\cG_g$. Based on this observation, we now prove that for any set $\cG_g$, a near-best ($\epsilon_\ell$-optimal of $i_g$) is retained as the winner $c_g$ with probability at least $\big( 1 - \frac{\delta_\ell}{2}\big)$. More formally:

\begin{lem}
\label{lem:divbat_n2} 
Consider any particular set $\cG_g$ at any iteration $\ell$. Let $i_g \leftarrow \arg \max_{i \in \cG_g}\theta_i$, then with probability at least $\Big(1-{\delta_\ell}\Big)$, $\theta_{c_g} > \theta_{i_g} - \frac{\epsilon_\ell}{c}$.
\end{lem}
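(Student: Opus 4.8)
The plan is to parallel the structure of the winner-feedback proof (Lemma \ref{lem:div_bb}) but to account for the fact that rank breaking on a top-$m$ list produces many pairwise comparisons per round. First I would fix an iteration $\ell$ and a group $\cG_g$, write $i_g = \arg\max_{i \in \cG_g}\theta_i$, and set up a ``bad event'' analogous to the winner-feedback case: the algorithm outputs $c_g$ with $\theta_{c_g} < \theta_{i_g} - \epsilon_\ell/c$. By the hypothesis \mrat$(i_g, c_g) \ge 1 + \tfrac{4c\Delta_{i_g c_g}}{1-2c}$ together with $\Delta_{i_g c_g} > \epsilon_\ell/c$, and then Corollary \ref{cor:pp_1i} (or Lemma \ref{lem:pp_1i} applied in the relevant subset), this forces $p_{c_g i_g \mid \cG_g} < \tfrac12 - \epsilon_\ell$; but for $c_g$ to be selected by the algorithm's rule we need $\hp_{c_g i_g} + \tfrac{\epsilon_\ell}{2} \ge \tfrac12$, i.e. $\hp_{c_g i_g} \ge \tfrac12 - \tfrac{\epsilon_\ell}{2}$, which is a deviation of at least $\tfrac{\epsilon_\ell}{2}$ from $p_{c_g i_g \mid \cG_g}$. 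So the whole job reduces to bounding the probability that the empirical pairwise preference $\hp_{c_g i_g}$ deviates from its mean by $\tfrac{\epsilon_\ell}{2}$, conditioned on having enough pairwise samples $n_{c_g i_g} = w_{c_g i_g} + w_{i_g c_g}$.

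The key extra ingredient, compared to the winner case, is a lower bound on $n_{c_g i_g}$: I would invoke Lemma \ref{lem:divbat_n1} (with $\eta = \tfrac12$), which guarantees that with probability at least $1 - \tfrac{\delta_\ell}{2k}$ the strong item $i_g$ appears in the top-$m$ list at least $\tfrac{mt}{2k}$ times; and each such appearance, via rank breaking, produces a comparison between $i_g$ and every other item in $\cG_g$, in particular with $c_g$. Hence on this high-probability event $n_{c_g i_g} \ge \tfrac{mt}{2k}$. Then, conditioned on $n_{c_g i_g}$, the comparisons extracted by rank breaking for the pair $(i_g, c_g)$ are i.i.d.\ Bernoulli with mean $p_{c_g i_g \mid \cG_g}$ --- this is the usual observation that restricting a sequence of RUM draws to a fixed pair yields i.i.d.\ pairwise outcomes --- so a Hoeffding bound gives $Pr(\hp_{c_g i_g} - p_{c_g i_g \mid \cG_g} \ge \tfrac{\epsilon_\ell}{2} \mid n_{c_g i_g} \ge \tfrac{mt}{2k}) \le \exp(-2 \cdot \tfrac{mt}{2k} \cdot (\tfrac{\epsilon_\ell}{2})^2) = \exp(-\tfrac{mt\epsilon_\ell^2}{4k})$. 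Plugging in $t = \lceil \tfrac{4k}{m\epsilon_\ell^2}\ln\tfrac{2k}{\delta_\ell}\rceil$ makes this at most $\tfrac{\delta_\ell}{2k}$. Combining with the $\tfrac{\delta_\ell}{2k}$ failure probability from Lemma \ref{lem:divbat_n1}, the bad event for a fixed suboptimal candidate has probability at most $\tfrac{\delta_\ell}{k}$.

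Finally I would take a union bound over all at most $k-1$ items $i' \in \cG_g$ that are $\tfrac{\epsilon_\ell}{c}$-suboptimal (i.e.\ $\theta_{i'} < \theta_{i_g} - \tfrac{\epsilon_\ell}{c}$), exactly as in the proof of Lemma \ref{lem:div_bb}, to conclude that with probability at least $1 - \delta_\ell$ no such $i'$ is selected as $c_g$; equivalently $\theta_{c_g} > \theta_{i_g} - \tfrac{\epsilon_\ell}{c}$. I would also note the edge case: if $i_g$ itself satisfies the selection rule it is fine, and if the algorithm falls through to the ``select $c_g$ uniformly at random'' branch then some $i'$ must have failed the rule, which is already covered by the bad-event analysis.

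I expect the main obstacle to be the bookkeeping around the conditioning: one must be careful that the number of pairwise comparisons $n_{c_g i_g}$ is itself random and possibly correlated with the outcomes, so the clean step is to lower-bound $n_{c_g i_g}$ on a high-probability event (Lemma \ref{lem:divbat_n1}) and then apply Hoeffding's inequality on a per-outcome basis, using that the rank-broken pairwise comparisons for a fixed pair are conditionally i.i.d.\ given how many there are. A secondary subtlety is matching up the constants: the selection margin $\tfrac{\epsilon_\ell}{2}$, the factor-of-$4$ relating \mrat\ to pairwise advantage (Lemma \ref{lem:pp_1i}), and the $\tfrac{1}{1-2c}$ factor in the hypothesis all need to line up so that $\Delta_{i_g c_g} > \tfrac{\epsilon_\ell}{c}$ really does yield $p_{c_g i_g \mid \cG_g} < \tfrac12 - \epsilon_\ell$ with room to spare; I would verify this affine chain of inequalities carefully but it is otherwise routine.
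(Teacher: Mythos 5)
Your proposal is correct and follows essentially the same route as the paper's proof: Lemma \ref{lem:divbat_n1} with $\eta=\tfrac12$ to lower-bound the number of rank-broken comparisons $n_{j i_g}$ by $\tfrac{mt}{2k}$, Lemma \ref{lem:pp_1i} to translate the \mrat\ hypothesis into $p_{j i_g\mid\cG_g}<\tfrac12-\epsilon_\ell$ for $\tfrac{\epsilon_\ell}{c}$-suboptimal $j$, a Hoeffding bound at deviation $\tfrac{\epsilon_\ell}{2}$ (which the paper states as an intersection event with $\{n_{j i_g}\ge \tfrac{mt}{2k}\}$, matching your conditioning remark), and a union bound over the $k-1$ candidates, with the same observation that $i_g$ itself remains a valid candidate under the selection rule.
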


\begin{proof}
With top-$m$ ranking feedback, the crucial observation lies in that at any round $\tau \in [t]$, whenever an item $i \in \cG_g$ appears in the top-$m$ ranking $\bsigma_{\tau}$, then the rank breaking update ensures that every element in the top-$m$ set gets compared to each of the rest $k-1$ elements of $\cG_g$ - it defeats to every element preceding item in $\sigma \in \Sigma_{\cG_{gm}}$, and wins over the rest.  
If $n_{ij} = w_{ij} + w_{ji}$ denotes the number of times item $i$ and $j$ are compared after rank-breaking, for $i,j \in \cG_g$, $n_{ij} = n_{ji}$, and from Lemma \ref{lem:divbat_n1} with $\eta = \frac{1}{2}$ we have that $n_{i_g j} \ge \frac{mt}{2k}$ with probability at least $(1-\delta_\ell/2k)$.
%Now suppose indeed $\exists$ $c_g$ such that $w_{c_g j }\ge w_{j c_g}, \,\forall j \in [k]$
Given the above arguments in place, for any item $j \in \cG_g \sm \{i_g\}$, by Hoeffdings inequality:

\begin{align*}
Pr\big( \big\{ \hp_{j i_g} - p_{j i_g| \cG_g} > \frac{\epsilon_\ell}{2}\big\} \cap \big\{ n_{j i_g} \ge \frac{mt}{2k} \big\} \big) \le \exp\Big(-2\frac{mt}{2k}{(\epsilon_\ell/2)}^2 \Big) \big) \le \frac{\delta_\ell}{2k},%4
\end{align*}

Now consider any item $j$ such that $\theta_{i_g} - \theta_{j} > \epsilon_\ell/c $, then we have $\frac{Pr(i_g|\cG_g)}{Pr(j|\cG_g)} > 1 + 4 \epsilon_\ell$, which by Lem. \ref{lem:pp_1i} implies $p_{i_g j | \cG_g} > \frac{1}{2} + \epsilon_\ell$, or equivalently $p_{ j i_g| \cG_g} < \frac{1}{2} - \epsilon_\ell$.

But since we show that for any item $j \in \cG_g \sm \{1\}$, with high probability $(1-\delta_\ell/2k)$, we have $\hp_{j i_g} - p_{j i_g|\cG_g} < \frac{\epsilon_\ell}{2}$. Taking union bound above holds true for any $j \in \cG_g \sm \{1\}$ with probability at least $(1-\delta/2)$. Combining with the above claim of $p_{ j i_g| \cG_g} < \frac{1}{2} - \epsilon_\ell$, this further implies $\hp_{j i_g} + \frac{\epsilon_\ell}{2} < p_{j i_g|\cG_g} + {\epsilon_\ell} < \frac{1}{2}$. Thus no such $\epsilon_\ell$ suboptimal item can be picked as $c_g$ for any subgroup $\cG_g$, at any phase $\ell$. 

On the other hand, following the same chain of arguments note that $\hp_{i_g j} - p_{i_g j|\cG_g} > -\frac{\epsilon_\ell}{2} \implies \hp_{i_g j} + \frac{\epsilon_\ell}{2} > p_{i_g j|\cG_g} > \frac{1}{2}$ for all $j \in \cG_g$, $i_g$ is a valid candidate for $c_g$ always, or in other case some other $\epsilon_\ell$-suboptimal item $j$ (such $\theta_{j} > \theta_{i_g} - \epsilon_\ell$) can be chosen as $c_g$. This concludes the proof.
\end{proof}

The correctness-claim now follows using a similar argument as given for the proof of Thm. \ref{thm:alg_pp}. We add the details below for the sake of completeness:
Without loss of generality, we assume the best item of the \rumk\, model is $\theta_1$, i.e. $\theta_1 > \theta_i \, \forall i \in [n] \sm \{1\}$.
Now for any iteration $\ell$, let us define $g_\ell \in [G]$ to be the index of the set that contains \textit{best item} of the entire set $S_\ell$, i.e. $\arg\max_{i \in S_\ell}\theta_i \in \cG_{g_\ell}$. Then applying Lemma \ref{lem:divbat_n2}, with probability at least $(1-\delta_\ell)$,\, $\theta_{c_{g_\ell}} > \theta_{i_{g_\ell}} - \epsilon_\ell/c$. Note that initially, at phase $\ell = 1$, $i_{g_\ell} = 1$. Then, for each iteration $\ell$, applying Lemma \ref{lem:divbat_n2} recursively to $\cG_{g_\ell}$, we finally get $\theta_{r} > \theta_1 -\Big( \frac{\epsilon}{8} + \frac{\epsilon}{8}\Big(\frac{3}{4}\Big) + \cdots + \frac{\epsilon}{8}\big(\frac{3}{4}\big)^{\lfloor \ln_k n \rfloor} \Big) - \frac{\epsilon}{2} \ge \theta_1 -\frac{\epsilon}{8}\Big( \sum_{i = 0}^{\infty}\big( \frac{3}{4} \big)^{i} \Big) - \frac{\epsilon}{2} \ge \theta_1 - \epsilon$.
Thus assuming the algorithm does not fail in any of the iteration $\ell$, we finally have that $p_{r_*1} > \frac{1}{2} - \epsilon$---this shows that the final item output by \algp\, is $\epsilon$ optimal.

Finally note that since at each iteration $\ell$, the algorithm fails with probability at most $\delta_\ell(1/2 + \frac{1}{2k}) \le \delta_\ell$, the total failure probability of the algorithm is at most $\Big( \frac{\delta}{4} + \frac{\delta}{8} + \cdots + \frac{\delta}{2^{\lceil \frac{n}{k}  \rceil}} \Big) + \frac{\delta}{2} \le \delta$.
This shows the correctness of the algorithm, concluding the proof.
\end{proof}

\subsection{Proof of Theorem \ref{thm:lb_pacpl_rank}}
\label{app:alg_ppm_lm}

The proof proceeds almost same as the proof of Thm. \ref{thm:lb_pacpl_win}, the only difference lies in the analysis of the KL-divergence terms with \tf. 

Consider the exact same set of \rumk\, instances, $\{\bnu^a\}_{a = 1}^n$ we constructed for Thm. \ref{thm:lb_pacpl_win}. It is now interesting to note that how the \tf\, affects the KL-divergence analysis, precisely the KL-divergence shoots up by a factor of $m$ which in fact triggers an $\frac{1}{m}$ reduction in regret learning rate. We show this below formally.

Note that for \tf\, for any problem instance $\bnu^a, \, a \in [n]$, each $k$-set $S \subseteq [n]$ is associated to ${k \choose m} (m!)$ number of possible outcomes, each representing one possible ranking of set of $m$ items of $S$, say $S_m$. 
Also the probability of any permutation $\bsigma \in \bSigma_{S}^{m}$ is given by
$
p^a_S(\bsigma) = Pr_{\bnu^a}(\bsigma|S),
$
where $Pr_{\bnu^a}(\bsigma|S)$ is as defined for \tf\, for \rumk\, problem instance $\bnu^a$ (see Sec. \ref{sec:tr}). More formally,
%We denote $k' = |S| \in [k]$. 
%For ease of analysis let us first assume $1 \notin S$  In this case we get
for problem \textbf{Instance-a}, we have that: 
\begin{align*}
p^a_S(\bsigma) & = Pr_{\bnu_a}(\bsigma = \sigma|S) = \prod_{i = 1}^{m}Pr(X_{\sigma(i)} > X_{\sigma(j)}, ~\forall j \in \{i+1, \ldots m\}), \, \forall \bsigma \in \bSigma_S^m\\
& = Pr_{\bnu_a}(\bsigma = \sigma|S) = \prod_{i = 1}^{m}Pr(\zeta_{\sigma(i)} > \zeta_{\sigma(j)} - (\theta_{\sigma(i)} - \theta_{\sigma(j)}^a), ~\forall j \in \{i+1, \ldots m\}), \, \forall \bsigma \in \bSigma_S^m\\
\end{align*}

As also argued in the proof of Thm. \ref{thm:lb_pacpl_win}, note that for any top-$m$ ranking of $\bsigma \in \bSigma_S^m$, $KL(p^1_S(\bsigma), p^a_S(\bsigma)) = 0$ for any set $S \not\owns a$. Hence while comparing the KL-divergence of instances $\bnu^1$ vs $\bnu^a$, we need to focus only on sets containing $a$ (recall we denote this as $S^a$). Applying the chain rule for KL-divergence, we now get:

\begin{align}
\label{eq:lb_witf_kl}
\nonumber KL(p^1_S, p^a_S) = KL(p^1_S(\sigma_1),& p^a_S(\sigma_1)) + KL(p^1_S(\sigma_2 \mid \sigma_1), p^a_S(\sigma_2 \mid \sigma_1)) + \cdots \\ 
& + KL(p^1_S(\sigma_m \mid \sigma(1:m-1)), p^a_S(\sigma_m \mid \sigma(1:m-1))),
\end{align}
where we abbreviate $\sigma(i)$ as $\sigma_i$ and $KL( P(Y \mid X),Q(Y \mid X)): = \sum_{x}Pr\Big( X = x\Big)\big[ KL( P(Y \mid X = x),Q(Y \mid X = x))\big]$ denotes the conditional KL-divergence. 
Moreover it is easy to note that for any $\sigma \in \Sigma_{S}^m$ such that $\sigma(i) = a$, we have $KL(p^1_S(\sigma_{i+1} \mid \sigma(1:i)), p^a_S(\sigma_{i+1} \mid \sigma(1:i))) := 0$, for all $i \in [m]$.

Now using the KL divergence upper bounds, as derived in the proof of Thm. \ref{thm:lb_pacpl_win}, we have than 
\[
KL(p^1_S(\sigma_1), p^a_S(\sigma_1)) \le \frac{ \Delta_a'^2}{\frac{8\epsilon^2}{k}}.
\]

One can potentially use the same line of argument to upper bound the remaining KL divergence terms of \eqref{eq:lb_witf_kl} as well. More formally note that for all $i \in [m-1]$, we can show that:
\begin{align*}
KL&(p^1_S(\sigma_{i+1} \mid \sigma(1:i)), p^a_S(\sigma_{i+1} \mid \sigma(1:i))) \\
& = \sum_{\sigma' \in \Sigma_S^i}Pr(\sigma')KL(p^1_S(\sigma_{i+1} \mid \sigma(1:i))=\sigma', p^a_S(\sigma_{i+1} \mid \sigma(1:i))=\sigma')\le \frac{8\epsilon^2}{k}
\end{align*}

Thus applying above in \eqref{eq:lb_witf_kl} we get:

\begin{align}
\label{eq:lb_witf_kl2}
KL(p^1_S, p^a_S) & = KL(p^1_S(\sigma_1) + \cdots + KL(p^1_S(\sigma_m \mid \sigma(1:m-1)), p^a_S(\sigma_m \mid \sigma(1:m-1))) \le \dfrac{ 8m\epsilon^2}{k}.
\end{align}

Eqn. \eqref{eq:lb_witf_kl2} precisely gives the main result to derive Thm. \ref{thm:lb_pacpl_rank}. Note that it shows an $m$-factor blow up in the KL-divergence terms owning to \tf. The rest of the proof follows exactly the same argument used in \ref{thm:lb_pacpl_win} which can easily be seen to yield the desired sample complexity lower bound.

\end{document}